\newcolumntype{A}{>{\centering\columncolor{gray!5!white}\arraybackslash}X}
\newcolumntype{B}{>{\centering\columncolor{gray!10!white}\arraybackslash}X}
\newcolumntype{C}{>{\centering\columncolor{gray!20!white}\arraybackslash}p{1.7cm}}
\newcolumntype{D}{>{\centering\columncolor{gray!30!white}\arraybackslash}p{1.7cm}}
\newcolumntype{E}{>{\centering\columncolor{gray!20!white}\arraybackslash}X}
\DeclarePairedDelimiter\floor{\lfloor}{\rfloor}
\newcommand{\fw}{CECILIA}
\newcommand{\app}{ppRKN}
\newcommand{\dec}{f} 
\newcommand{\numbit}{n} 
\newcommand{\numanc}{q} 
\newcommand{\kmer}{k} 
\newcommand{\seq}{s} 
\newcommand{\setnf}{S} 
\newcommand{\repnf}[1]{\hat{#1}} 
\newcommand{\ringA}{L} 
\newcommand{\ringB}{K} 
\newcommand{\ringC}{V} 
\newcommand{\ringBool}{B} 
\newcommand{\party}{P}
\newcommand{\onehotdim}{d} 
\newcommand{\clas}{w}
\newcommand{\G}{G} 
\newcommand{\M}{\mathcal{M}} 
\newcommand{\s}{s} 
\newcommand{\alp}{\tau} 
\newcommand{\evl}{\Lambda} 
\newcommand{\evc}{Q} 
\newcommand{\setR}{\mathbb{E}}
\theoremstyle{plain}
\newtheorem{theorem}{Theorem}
\newtheorem{lemma}[theorem]{Lemma}
\theoremstyle{definition}
\theoremstyle{remark}
\title{\fw{}: Comprehensive Secure Machine Learning Framework}
\author{
    Ali Burak Ünal \textsuperscript{\rm 1,2},
    Nico Pfeifer \textsuperscript{\rm 1},
    Mete Akgün \textsuperscript{\rm 1,2}
}
\begin{document}

\maketitle

\begin{abstract}
Since machine learning algorithms have proven their success in many different applications, there is also a big interest in privacy preserving machine learning methods for building models on sensitive data. Moreover, the increase in the number of data sources and the high computational power required by those algorithms force individuals to outsource the training and/or the inference of a machine learning model to the clouds providing such services. To address this, we propose a secure three party computation framework, \fw{}, offering privacy preserving building blocks to enable complex operations privately. In addition to the adapted and common operations like addition and multiplication, it offers multiplexer, most significant bit and modulus conversion. The first two are novel in terms of methodology and the last one is novel in terms of both functionality and methodology. \fw{} also has two complex novel methods, which are the exact exponential of a public base raised to the power of a secret value and the inverse square root of a secret Gram matrix. We use \fw{} to realize the private inference on pre-trained recurrent kernel networks, which require more complex operations than most other deep neural networks, on the structural classification of proteins as the first study ever accomplishing the privacy preserving inference on recurrent kernel networks. In addition to the successful private computation of basic building blocks, the results demonstrate that we perform the exact and fully private exponential computation, which is done by approximation in the literature so far. Moreover, they also show that we compute the exact inverse square root of a secret Gram matrix up to a certain privacy level, which has not been addressed in the literature at all. We also analyze the scalability of \fw{} to various settings on a synthetic dataset. The framework shows a great promise to make other machine learning algorithms as well as further computations privately computable by the building blocks of the framework.
\end{abstract}

\section{Introduction}
In recent years, machine learning algorithms, especially deep learning algorithms, have become more sophisticated and are now more data-driven to achieve better performance. At the same time, the number of sources generating data with sensitive information and the amount of that data have increased dramatically. By its very nature, the privacy of this data must be protected during collaborative model training and/or inference. Likewise, the privacy of such a model must be kept private when the owner of the trained model deploys it as a prediction service.


To preserve the privacy of both the data and the model, there are several privacy techniques utilized in the literature. One of these techniques is differential privacy (DiP) introduced by \citet{dwork2014algorithmic}. In DiP, the input data, the model parameters, and/or the output are perturbed to protect the privacy by adding noise in a certain budget that adjusts the privacy level \cite{abadi2016deep,chen2020rnn}. By nature, noise addition sacrifices the performance of the model and exactness of the result to some extent. A cryptographic technique called homomorphic encryption (HE) addresses the missing points of DiP in addition to the basic privacy requirements. HE protects the privacy of the data and/or the model by encrypting the data and/or the parameters of the model with different key schemes. Thanks to this mechanism, various operations such as addition and multiplication can be performed in the encrypted domain. There are several attempts to implement machine learning algorithms using HE \cite{bakshi2020cryptornn,hesamifard2017cryptodl,gilad2016cryptonets,lu2021pegasus}. However, its huge runtime and the limited number of practical operations that can be realized with HE makes these computations impractical. Secure multi-party computation (MPC), on the other hand, addresses the missing points of HE and the fundamental basic privacy requirements. The data and/or the model parameters are shared among several parties in such a way that none of the parties can learn about the data and/or the model parameters on their own. Then, these parties perform the desired computation privately. To address various machine learning algorithms, there are several MPC frameworks in the literature \cite{knott2021crypten,wagh2019securenn,mohassel2017secureml,damgaard2012multiparty,rathee2020cryptflow2,wagh2020falcon,patra2021aby2} some of which also utilizes HE \cite{mishra2020delphi,huang2022cheetah}. Although they have some efficient and secure basic functions for several types of deep neural networks, from which we have also benefited, they lack some complex functions completely such as the inverse square root of a Gram matrix or the exact computation of some functions such as the exponential to realize the exact inference on some deep neural networks like recurrent kernel networks \cite{chen2019recurrent}.

In this study, we introduce a general purpose efficient secure multi-party computation framework, \fw{}, based on $3$ computing parties. To the best of our knowledge, the novelty of \fw{} over the other frameworks stems the privacy preserving exact computation of the exponential of a known base raised to the power of a secret shared value, the computation of the inverse square root of a secret shared Gram matrix with a high probability, a new privacy preserving modulus conversion from $2^{\numbit - 1}$ to $2^{\numbit}$, efficient private computation of the most significant bit and novel methodology of performing multiplexer in a privacy preserving way. By using \fw{}, we perform privacy preserving inference on a pre-trained recurrent kernel network, which has more complex operations than most of the other deep neural networks and cannot be realized using any other framework in the literature. As a summary, \fw{}
\begin{itemize}
    \item provides novel and/or efficient private computations for the most significant bit, multiplexer and modulus conversion to address the functions used in machine learning algorithms.
    \item introduces the exact privacy preserving computation of the exponential of a known base raised to the power of a secret shared value for the first time in the literature.
    \item enables the privacy preserving inverse square root of a secret shared Gram matrix with a high probability as the first attempt in the literature.
    \item realizes an efficient privacy preserving inference on recurrent kernel networks for the first time.
\end{itemize}



\section{Preliminaries}


\subsection{Security Model}
To prove the security of \fw{}, we use the honest-but-curious security model, in other words we analyze the security of the framework when there is a semi-honest adversary who corrupts a party and follows the protocols honestly, but at the same time tries to infer information during the execution of these protocols. In our analyses of the building blocks of \fw{}, we consider a scenario where a semi-honest adversary corrupts either one of the proxies or the helper party and tries to infer the input and/or output values of the executed function. We take the same approach to analyze the security of privacy preserving RKNs and focus on if the semi-honest adversary corrupting a party can break the privacy of the test sample of the data owner and/or the model parameters of the model owner during the prediction of a test sample using the outsourced model.

\subsection{Square-and-multiply Algorithm for Exponential} \label{sec:sq_and_mul}
One of the approaches to computing the exponential of a base raised to the power of a value that can be represented in a binary form is the square-and-multiply algorithm. Let $b$ be the base and $a$ be the power, whose binary representation is $\langle a \rangle$ of length \numbit. Then, one can compute $b^a$ as follows: 
\begin{equation}
    b^a = \prod_{i=0}^{n-1} b^{\langle a \rangle_i \cdot 2^i}    
\end{equation}
where $\langle a \rangle_i$ represents the bit value of $a$ at position $i$, assuming that the indexing starts from the least significant bit, that is $\langle a \rangle_0$ corresponds to the least significant bit and $\langle a \rangle_{n-1}$ corresponds to the most significant bit.

\subsection{Recurrent Kernel Networks}
\citet{chen2019recurrent} gave a kernel perspective of RNNs by showing that the computation of the specific construction of RNNs, which they call recurrent kernel networks (RKNs), mimics the substring kernel allowing mismatches and the local alignment kernel, which are widely used on sequence data \cite{el2008predicting,nojoomi2017weighted,leslie2001spectrum}. RKNs use a well-designed kernel formulation and enable the parameters of the model to be optimized via backpropagation. As a result, they outperform the traditional substring kernel and the local alignment kernel as well as the LSTMs \cite{hochreiter2007fast}. Such performance promises to have a large impact on the results of tasks with small to medium size data.

In RKNs, small motifs, called \textit{anchor points}, are used as templates to measure similarities among sequences and each character of an anchor point is encoded via an encoding similar to one-hot encoding. Let there be $\numanc$ anchor points of length $\kmer$, each of whose characters is encoded using a vector of size $\onehotdim$, and a sequence $x$ of length $\seq$, whose characters are encoded using one-hot encoded vectors of length $\onehotdim$. For the $t$-th character of the input sequence, RKNs first compute the similarity of this character to each character of all anchor points by the following:
\begin{equation}
    K(x_t, z_j^i) = e^{\alpha(\langle x_t, z_j^i \rangle - 1)}
    \label{eq:sim}
\end{equation}
where $\alpha$ is the similarity measure parameter, $x_t$ is the $\onehotdim$-dimensional vector representing the $t$-th character of the sequence $x$ and $z_j^i$ is the $\onehotdim$-dimensional vector representing the $i$-th character of the $j$-th anchor point. Once the similarity of the $t$-th character of the sequence to the $j$-th character of all $\numanc$ anchor points is computed and represented as a vector $b_j[t]$, the computation continues as follows:
\begin{equation}
    c_j[t] = \lambda c_j[t-1] + c_{j-1}[t-1] \otimes b_j[t]
    \label{eq:iter}
\end{equation}
where $c_j[t]$ is the initial mapping of the sequence up to its $t$-th character into a $\numanc$-dimensional vector based on anchor points of length $j$ for $j \in \{1,\ldots,\kmer\}$ and $t \in \{1,\ldots,\seq\}$, $\lambda$ is a scalar value to downgrade the effect of the previous time points and $\otimes$ is the Hadamard product. As the base of this recursive equation, $c_0[t]$ is a vector of $1$s and $c_j [0]$ is a vector of $0$s if $j \neq 0$.

To obtain the final mapping of the sequence, RKN multiplies $c_j[\seq]$ by the inverse square root matrix of the Gram matrix of the anchor points up to their $j$-th characters. Afterwards, in the substring kernel allowing mismatches, they use a linear layer, which is a dot product between $c_{\kmer}[\seq]$ and the weight vector of the classifier $\clas$, to obtain the prediction of the input sequence. Figure \ref{fig:rkn_circuits} depicts these computations.

\begin{figure}[t]
    \centering
    \begin{subfigure}[b]{0.28\textwidth}
        \hspace{-4mm}
        \includegraphics[width=\textwidth]{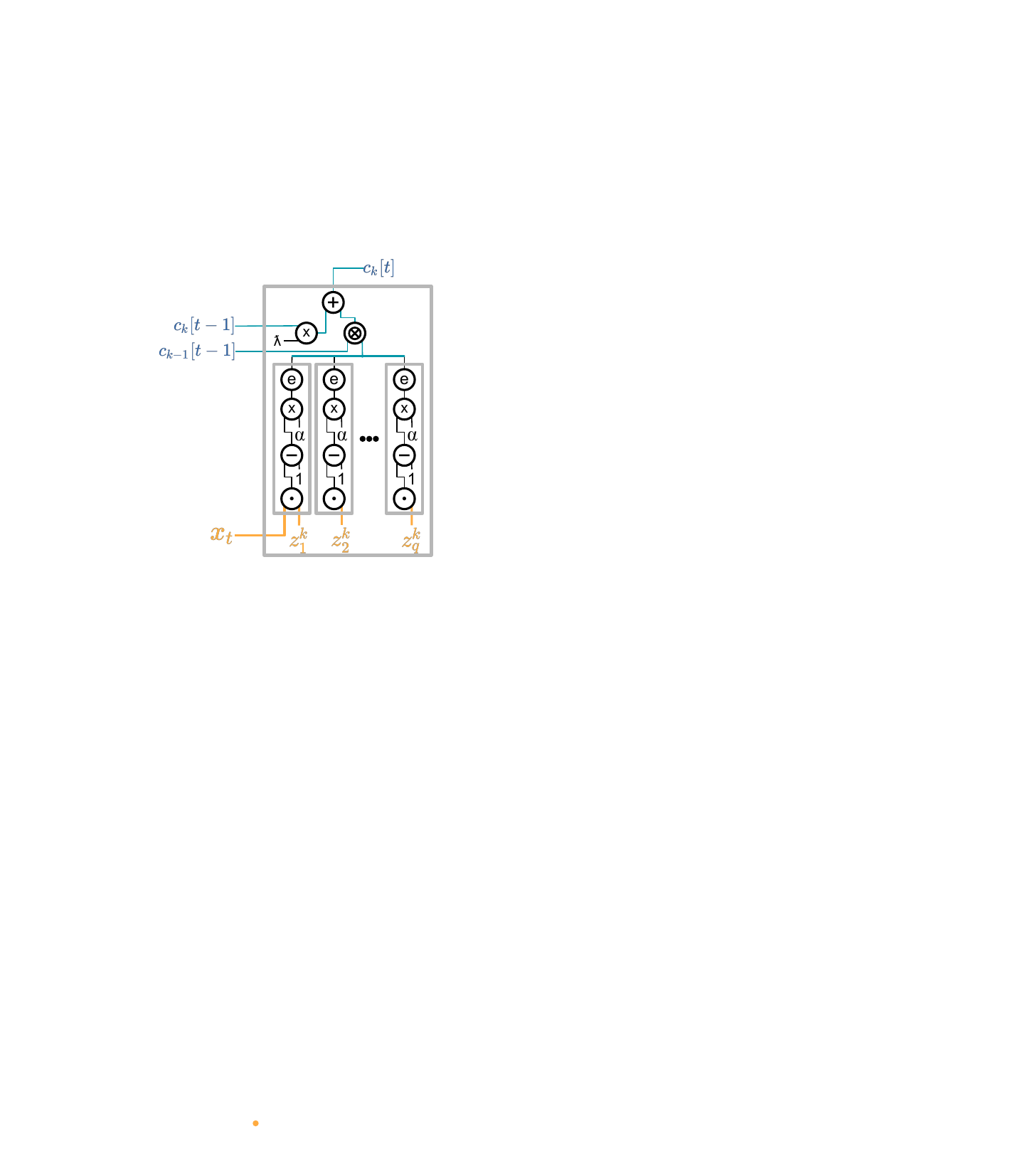}
        \caption{}
        \label{fig:rkn_single_neuron}
    \end{subfigure}
    \hspace{-8mm}
    \begin{subfigure}[b]{0.18\textwidth}
        \centering
        \includegraphics[width=\textwidth]{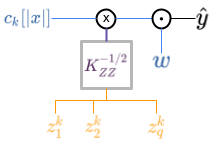}
        \caption{}
        \label{fig:rkn_linear_classifier}
    \end{subfigure}
    \caption{ The arithmetic circuit of the computation in \textbf{(a)} a single neuron of RKN at position $t$ and $k$-mer level $k$ and \textbf{(b)} the linear classifier layer of RKN after the last position of the input sequence $x$. Black lines are a scalar value. Orange lines are $\onehotdim$-dimensional vectors and blue lines are $\numanc$-dimensional vectors. Purple ones are $(\numanc \times \numanc)$-dimensional matrix}
    \label{fig:rkn_circuits}
\end{figure}

\section{Framework}
\fw{} is based on three computing parties and 2-out-of-2 additive secret sharing. Two of these parties, $\party_0$ and $\party_1$, are called \textit{proxy} and the external entities such as the model owner and the data sources interact with them. The third one, $\party_2$, is the \textit{helper} party helping the proxies compute the desired function without breaking the privacy. It provides the proxies with the shares of the purposefully designed values. It also performs calculations on the data masked by the proxies in the real number domain and returns the share of the results of these calculations. In the rest of this section, we describe the number format we use to represent values. We then introduce the methods of \fw{}. 


\subsection{Notations}
To ensure the privacy in \fw{}, we use 2-out-of-2 additive secret sharing over three different rings $\mathbb{Z}_{\ringA}$, $\mathbb{Z}_{\ringB}$ and $\mathbb{Z}_{\ringC}$ where $\ringA = 2^{\numbit}$, $\ringB = 2^{\numbit-1}$, $\ringC=67$ and $\numbit=64$. We denote two shares of $x$ over $\mathbb{Z}_{\ringA}$, $\mathbb{Z}_{\ringB}$ and $\mathbb{Z}_{\ringC}$ by ($\langle x\rangle_0$, $\langle x\rangle_1$), ($\langle x\rangle_0^\ringB$, $\langle x\rangle_1^\ringB$) and ($\langle x\rangle_0^\ringC$, $\langle x\rangle_1^\ringC$), respectively. If a value $x$ is shared over the ring $\mathbb{Z}_\ringC$, every bit of $x$ is additively shared in $\mathbb{Z}_\ringC$. This means $x$ is shared as a vector of $\numbit$ shares where each share takes a value between $0$ and $(\ringC - 1)$. We also use boolean sharing of a single bit denoted by ($\langle x\rangle_0^\ringBool$, $\langle x\rangle_1^\ringBool$).

\subsection{Number Format}
Since machine learning algorithms can require both positive and negative real numbers, \fw{} should be able to work with these numbers. To achieve this, we follow the number format proposed by \citet{mohassel2017secureml} in SecureML. In this number format, the most significant bit indicates the sign. $0$ and $1$ indicate positive and negative values, respectively. Moreover, a certain number of least significant bits of this representation are for the fractional part of the value and the rest expresses the integer part.

\subsection{Addition (ADD)}
In \fw{}, the proxies add the shares of two secret shared values they have to obtain the share of the addition of these values without any communication and privacy leakage.

\subsection{Multiplication (MUL)}
The multiplication operation, which we adapted from SecureML, uses the pre-computed multiplication triples \cite{DBLP:conf/crypto/Beaver91a} and requires \textit{truncation} because of the special number format. For more details, refer to the Supplement and see \cite{mohassel2017secureml,wagh2019securenn}.



\IncMargin{0.4em}
\begin{algorithm}[t]
\footnotesize
\DontPrintSemicolon
\SetKwInOut{Input}{input}\SetKwInOut{Output}{output}
\SetKwFunction{algo}{$\mathsf{MOC}$}
\SetKwProg{myalg}{Algorithm}{}{}
\myalg{\algo{}}{
\Input{$\party_0$ and $\party_1$ hold $\langle x\rangle_0^{\ringB}$ and $\langle x\rangle_1^{\ringB}$, respectively}   
\Output{$\party_0$ and $\party_1$ get $\langle x\rangle_0$ and $\langle x\rangle_1$, respectively}
$\party_0$ and $\party_1$ hold a common random bit $u^\prime$\;
$\party_2$ picks a random numbers $r \in \mathbb{Z}_{\ringB}$ and generates $\langle r\rangle_0^{\ringB}$, $\langle r\rangle_1^{\ringB}$, 
$\{\langle r[j]\rangle_{0}^{\ringC}\}_{j \in[\numbit]}$ and $\{\langle r[j]\rangle_{1}^{\ringC}\}_{j \in[\numbit]}$.\;
$\party_2$ computes $w = \text{isWrap}(\langle r\rangle_0^{\ringB}, \langle r\rangle_1^{\ringB},\ringB)$ and divides $w$ into two boolean shares $w_0^B$ and $w_1^B$\;
$\party_2$ sends $\langle r\rangle_i^{\ringB}$, $\{\langle r[j]\rangle_{i}^{\ringC}\}_{j \in[\numbit]}$ and $w_i^B$ to $\party_i$, for each $i \in \{0, 1\}$\;
For each $i \in \{0, 1\}$, $\party_i$ executes Steps $7$-$8$\;
$\langle y\rangle_i^{\ringB} = \langle x\rangle_i^{\ringB}+\langle r\rangle_i^{\ringB}$\;
$\party_i$ reconstructs $y$ by exchanging shares with $\party_{1-i}$\;
$u_i^B = \mathsf{PC}(\{\langle r[j]\rangle_{i}^{\ringC}\}_{j \in[\numbit]},y,u^\prime)$\;
$\party_0$ computes $u_i^B = u_i^B \oplus u^\prime$\;
For each $i \in \{0, 1\}$, $\party_i$ computes $c_i^B = w_i^B\oplus u_i^B$\;
$\party_0$ computes $\langle y\rangle_0 = \langle y\rangle_0^{\ringB} + \text{isWrap}(\langle y\rangle_0^{\ringB}, \langle y\rangle_1^{\ringB}, \ringB) \cdot \ringB$\;
$\party_1$ sets  $\langle y\rangle_1 = \langle y\rangle_1^{\ringB}$\;
For each $i \in \{0, 1\}$, $\party_i$ computes
$\langle x\rangle_i = \langle y\rangle_i - (\langle r\rangle_i^{\ringB} + c_i^B \cdot \ringB)$\;
}
\caption{Modulus Conversion ($\mathsf{MOC}$)}
\label{alg:mc}
\end{algorithm}

\subsection{Modulus Conversion (MOC)} 
\fw{} offers the functionality $\mathsf{MOC}$ described in Algorithm \ref{alg:mc}. It converts shares over $\mathbb{Z}_{\ringB}$ to fresh shares over $\mathbb{Z}_{\ringA}$ where $\ringA=2\ringB$. Even though other frameworks in the literature have functions with a similar name, none of those performs this specific modulus conversion. To the best of our knowledge, \fw{} is the first framework offering this specific modulus conversion. Assuming that $\party_0$ and $\party_1$ have the shares $\langle x\rangle_0^\ringB$ and $\langle x\rangle_1^\ringB$, respectively, the first step for $\party_0$ and $\party_1$ is to mask their shares by using the shares of the random value $r \in \mathbb{Z}_\ringB$ sent by $\party_2$. Afterwards, they reconstruct  $(x+r) \in \mathbb{Z}_\ringB$ by first computing $\langle y \rangle_i^\ringB = \langle x \rangle_i^\ringB + \langle r \rangle_i^\ringB$ for $i \in \{0,1\}$ and then sending these values to each other. Along with the shares of $r \in \mathbb{Z}_\ringB$, $\party_2$ also sends the information in boolean shares telling whether the summation of the shares of $r$ wraps so that $\party_0$ and $\party_1$ can convert $r$ from the ring $\mathbb{Z}_\ringB$ to the ring $\mathbb{Z}_\ringA$. Once they reconstruct $y \in \mathbb{Z}_\ringB$, $\party_0$ and $\party_1$ can change the ring of $y$ to $\mathbb{Z}_\ringA$ by adding $\ringB$ to one of the shares of $y$ if $\langle y \rangle_0^\ringB + \langle y \rangle_1^\ringB$ wraps. After conversion, the important detail regarding $y \in \mathbb{Z}_\ringA$ is to fix the value of it. In case $(x+r) \in \mathbb{Z}_\ringB$ wraps, which we identify by using the private compare ($\mathsf{PC}$) method \cite{wagh2019securenn}, depending on the boolean share of the outcome of $\mathsf{PC}$, $\party_0$ or $\party_1$ or both add $\ringB$ to their shares. If both add, this means that there is no addition to the value of $y \in \mathbb{Z}_\ringA$. At the end, $\party_i$ subtracts $r_i \in \mathbb{Z}_\ringA$ from $y_i \in \mathbb{Z}_\ringA$ and obtains $x_i \in \mathbb{Z}_\ringA$ for $i \in \{0,1\}$.

\subsection{Most Significant Bit (MSB)}

One of the biggest improvements that \fw{} introduces is in the private determination of the most significant bit of a secret shared value $x$ via $\mathsf{MSB}$. We deftly integrated $\mathsf{MOC}$ and PC into $\mathsf{MSB}$ to reduce communication rounds. Given the shares of $x$, $\mathsf{MSB}$ first extracts the least significant $(\numbit - 1)$-bits via $mod\ \ringB$. Then, it converts the ring of this value from $\mathbb{Z}_\ringB$ to $\mathbb{Z}_\ringA$. $\mathsf{MSB}$ then subtracts it from $x$, which results in either $0$ or $\ringB$ in $\mathbb{Z}_\ringA$. Finally, it secretly maps this result to $0$ or $1$ and obtains the most significant bit of $x$ privately.

\begin{algorithm}[!htb]
    \footnotesize
    \DontPrintSemicolon
    \SetKwInOut{Input}{input}\SetKwInOut{Output}{output}
    \SetKwFunction{algo}{$\mathsf{MSB}$}
    \SetKwProg{myalg}{Algorithm}{}{}
    \myalg{\algo{}}{
        \Input{$\party_0$ and $\party_1$ hold $\langle x\rangle_0$ and $\langle x\rangle_1$, respectively}
        \Output{$\party_0$ and $\party_1$ get $\langle z\rangle_0$ and $\langle z\rangle_1$, respectively, where $z$ is equal to $0$ if the most significant bit of $x$ is $0$ and $1$ otherwise.}
        $\party_0$ and $\party_1$ hold a common random bit $f$ and $g$. $\party_0$ and $\party_1$ additionally hold $\ell$ common random values $s_{j} \in \mathbb{Z}_{\ringC}^{*}$ for all $j \in[\ell]$, a random permutation $\pi$ for $\ell$ elements and $\ell$ common random values $u_{j} \in \mathbb{Z}_{\ringC}^{*}$.\;
        $\party_2$ picks a random number $r \in \ringB$ and generates $\langle r\rangle_0^{\ringB}$, $\langle r\rangle_1^{\ringB}$, 
        $\{\langle r[j]\rangle_{0}^{\ringC}\}_{j \in[\ell]}$ and $\{\langle r[j]\rangle_{1}^{\ringC}\}_{j \in[\ell]}$.\;
        $\party_2$ computes $w = \text{isWrap}(\langle r\rangle_0^{\ringB}, \langle r\rangle_1^{\ringB},\ringB)$\;
        $\party_2$ sends $\langle r\rangle_i^{\ringB}$ and $\{\langle r[j]\rangle_{i}^{\ringC}\}_{j \in[\ell]}$ to $\party_i$, for each $i \in \{0, 1\}$\;
        For each $i \in \{0, 1\}$, $\party_i$ executes Steps $7$-$26$\;
        $\langle d\rangle_i^{\ringB} = \langle x\rangle_i \mod \ringB$\;
        $\langle y\rangle_i^{\ringB} = \langle d\rangle_i^{\ringB}+\langle r\rangle_i^{\ringB}$\;
        $y=\mathsf{Reconst(\langle y\rangle_i^{\ringB})}$\;
        $\langle y\rangle_i = \langle y\rangle_i^{\ringB} + i\cdot\text{isWrap}(\langle y\rangle_0^{\ringB}, \langle y\rangle_1^{\ringB}, \ringB) \cdot \ringB$\;
        $\langle a[0]\rangle_i = if\ringB - \langle x\rangle_i + \langle y\rangle_i - \langle r\rangle_i^{\ringB}$\; 
        $\langle a[1]\rangle_i = i(1-f)\ringB - \langle x\rangle_i + \langle y\rangle_i - \langle r\rangle_i^{\ringB}$\; 
        
        Let $t=y+1 \bmod 2^{\ell}$\;
        \For{$j = \ell;\ j > 0;\ j = j - 1$}{
          \uIf{$g=0$}{
            $\left\langle w_{j}\right\rangle_{i}^{\ringC}=\langle r[j]\rangle_{i}^{\ringC}+i y[j]-2 y[j]\langle r[j]\rangle_{i}^{\ringC}$\;
            $\left\langle c_{j}\right\rangle_{i}^{\ringC}=i y[j]-\langle r[j]\rangle_{i}^{\ringC}+j+\sum_{k=j+1}^{\ell}\left\langle w_{k}\right\rangle_{i}^{\ringC}$\;
          }
          \uElseIf{$g=1$ AND $r \neq 2^{\ell}-1$}{
            $\left\langle w_{j}\right\rangle_{i}^{\ringC}=\langle r[j]\rangle_{i}^{\ringC}+i t[j]-2 t[j]\langle r[j]\rangle_{i}^{\ringC}$\;
            $\left\langle c_{j}\right\rangle_{i}^{\ringC}=-i t[j]+\langle r[j]\rangle_{i}^{\ringC}+i+\sum_{k=j+1}^{\ell}\left\langle w_{k}\right\rangle_{i}^{\ringC}$;\
          }
          \Else{
            \uIf{$i \neq 1$}{
                $\left\langle c_{j}\right\rangle_{i}^{\ringC}=(1-i)\left(u_{j}+1\right)-i u_{j}$\;
            }
            \Else{
                $\left\langle c_{j}\right\rangle_{i}^{\ringC}=(-1)^{j} \cdot u_{j}$\;
            }
          }
        }
        
        Send $\left\{\left\langle b_{j}\right\rangle_{i}^{\ringC}\right\}_{j} = 
            \pi\left(\left\{ s_{j}\left\langle c_{j}\right\rangle_{i}^{\ringC} \right\}_{j}\right)$ 
            and $\langle a\rangle_i$ to $\party_2$ \;
        For all $j \in {[\ell]}$, $\party_2$ computes $d_{j} = 
            \mathsf{Reconst}(\langle d_{j}\rangle_{0}^{\ringC}, \langle d_{j}\rangle_{1}^{\ringC})$ 
            and sets $g^{\prime} = 1$ iff $\exists j \in [\ell]$ 
            such that $d_{j} = 0$. \;
        $\party_2$ reconstructs $a[j]$ where $j \in \{0, 1\}$ 
            and computes $a[j] = (a[j] - (g^{\prime} \oplus w) \ringB) / \ringB$\;
        $\party_2$ creates two fresh shares of $a[j]$ where $j \in \{0, 1\}$ and sends them to $\party_0$ and $\party_1$\;
        For each $i \in \{0, 1\}$, $\party_i$ executes Step $31$\;
        $\langle z\rangle_i = \langle a[f \oplus g]\rangle_i$\;
    }
    \caption{Most Significant Bit ($\mathsf{MSB}$)}
    \label{alg:msb}
\end{algorithm}

\subsection{Comparison of Two Secret Shared Values (CMP)}
\fw{} also provides $\mathsf{CMP}$ which privately compares two secret shared values $x$ and $y$, and outputs $0$ if $x \geq y$, $1$ otherwise in secret shared form. $\mathsf{CMP}$ utilizes $\mathsf{MSB}$ to determine the most significant bit of $x - y$. Then, it outputs $0$ if $\mathsf{MSB}$ gives $0$, $1$ otherwise.

\subsection{Multiplexer (MUX)}
\fw{} also provides the functionality $\mathsf{MUX}$ described in Algorithm \ref{alg:ss}. It performs the selection of one of two secret shared values based on the secret shared bit value using the randomized encoding (RE) of multiplication \citep{applebaum2017garbled}. In $\mathsf{MUX}$, the proxies compute Equation \ref{eq:sscom} using their shares ($\langle x\rangle_0$, $\langle y\rangle_0$, $\langle b\rangle_0$) and ($\langle x\rangle_1$, $\langle y\rangle_1$, $\langle b\rangle_1$), respectively, and obtain the fresh shares of $z=x -b(x-y)$. As shown in Equation \ref{eq:sscom}, the proxies need to multiply two values owned by different parties in the computation of $\langle b\rangle_0(\langle x\rangle_1 - \langle y\rangle_1)$ and $\langle b\rangle_1(\langle x\rangle_0 - \langle y\rangle_0$. They outsource these multiplications to the helper via the RE. To the best of our knowledge, it is the first time that the RE is used in such a functionality.
\begin{equation}
\small
\begin{split}
z ={}& x-b(x-y) \\
 ={}& \langle x\rangle_0 + \langle x\rangle_1 - \langle b\rangle_0(\langle x\rangle_0 - \langle y\rangle_0) - \langle b\rangle_1(\langle x\rangle_1 - \langle y\rangle_1) \\
 & - \langle b\rangle_0(\langle x\rangle_1 - \langle y\rangle_1)- \langle b\rangle_1(\langle x\rangle_0 - \langle y\rangle_0)
\end{split}
\label{eq:sscom}
\end{equation}

\begin{algorithm}[!ht]
\footnotesize
\DontPrintSemicolon
\SetKwInOut{Input}{input}\SetKwInOut{Output}{output}
\SetKwFunction{algo}{$\mathsf{MUX}$}
\SetKwProg{myalg}{Algorithm}{}{}
\myalg{\algo{}}{
\Input{$\party_0$ and $\party_1$ hold ($\langle x\rangle_0$, $\langle y\rangle_0$, $\langle b\rangle_0$) and ($\langle x\rangle_1$, $\langle y\rangle_1$, $\langle b\rangle_1$), respectively.}   
\Output{$\party_0$ and $\party_1$ get $\langle z\rangle_0$ and $\langle z\rangle_1$, respectively, where $z=x-b(x-y)$.}
$\party_0$ and $\party_1$ hold four common random values $r_i$ where $i \in \{0,1,2,3\}$\;
$\party_0$ computes $M_1=\langle x\rangle_0-\langle b\rangle_0(\langle x\rangle_0-\langle y\rangle_0)+r_1\langle b\rangle_0 + r_2(\langle x\rangle_0-\langle y\rangle_0)+r_2r_3$, $M_2=\langle b\rangle_0+r_0$, $M_3=\langle x\rangle_0-\langle y\rangle_0+r_3$\;
$\party_0$ sends $M_2$ and $M_3$ to $\party_2$\;
$\party_1$ computes $M_4=\langle x\rangle_1-\langle b\rangle_1(\langle x\rangle_1-\langle y\rangle_1) + r_0(\langle x\rangle_1-\langle y\rangle_1)+r_0r_1 + r_3\langle b\rangle_1$, $M_5=(\langle x\rangle_1-\langle y\rangle_1)+r_1$, $M_6=\langle b\rangle_1+r_2$\;
$\party_1$ sends $M_5$ and $M_6$ to $\party_2$\;
$\party_2$ computes $M_2M_5+M_3M_6$ $=z$\;
$\party_2$ divides $z$ into two shares $(\langle z\rangle_0+\langle z\rangle_1)$ and sends $\langle z\rangle_0$ and $\langle z\rangle_1$ to $\party_0$ and $\party_1$, respectively\;
$\party_0$ computes $\langle z\rangle_0=M_1 - \langle z\rangle_0$\;
$\party_1$ computes $\langle z\rangle_1=M_4-\langle z\rangle_1$\;
}
\caption{Multiplexer ($\mathsf{MUX}$)}
\label{alg:ss}
\end{algorithm}


\subsection{Dot Product (DP)}
Since the dot product between two vectors is a widely used operation in machine learning algorithms, \fw{} provides $\mathsf{DP}$. It essentially uses $\mathsf{MUL}$ to compute the element-wise multiplication of the vectors and $\mathsf{ADD}$ to sum the elements of the resulting vector to obtain the dot product result.

\subsection{Exponential Computation (EXP)}
One of the novel methods of \fw{} is functionality $\mathsf{EXP}$ given in \ref{alg:exp}. It computes the exponential of a publicly known base raised to the power of a given secret shared value. For this purpose, we have been inspired by the square-and-multiply algorithm. To the best of our knowledge, \fw{} is the first study that extends the core idea of the square-and-multiply algorithm to cover not only the positive numbers, but also the negative numbers as well as their decimal parts in a multi-party scenario and performs the exact exponential computation. As an overview, the proxies first obtain the most significant bit of the secret shared power and use this to select the set containing either the power itself and the contribution of each bit of a positive power or the absolute of the power and the contribution of each bit of a negative power. Then, the proxies determine the value of each bit of the power in a secret sharing form and use them to select between the previously selected contributions of the bits and a vector of $1$s. The last step is to multiply these selected contribution of the bits of the power to the exponential in a binary-tree-like-structure. In total, $\mathsf{EXP}$ requires two $\mathsf{MSB}$, two $\mathsf{MUX}$ and $log_2(\numbit)$-many $\mathsf{MUL}$ calls.

\begin{algorithm}[!ht]
\footnotesize
\DontPrintSemicolon
\SetKwInOut{Input}{input}\SetKwInOut{Output}{output}
\SetKwFunction{algo}{$\mathsf{EXP}$}
\SetKwProg{myalg}{Algorithm}{}{}
\myalg{\algo{}}{
\Input{$\party_0$ and $\party_1$ hold $\langle x \rangle_0$ and $\langle x \rangle_1$, respectively, and publicly known base $b$}   
\Output{$\party_0$ and $\party_1$ get $\langle z\rangle_0$ and $\langle z\rangle_1$, respectively}

For $i \in \{0,1\}$, $\party_i$ executes Steps 3-15 with the help of $\party_2$ \;

$\langle s\rangle_i = \mathsf{MSB}(\langle x\rangle_i)$

$\langle |x|\rangle_i = 0 - \langle x\rangle_i$

\For{$j = \numbit;\ j > 0;\ j = j - 1$}{
   $\langle cP[\numbit - j] \rangle_i = i * b^{2^{j - \dec}}$\;
   $\langle cN[\numbit - j] \rangle_i = i * (1 / b^{2^{j - \dec}}$)
}

$(\langle \hat{x} \rangle_i, \langle cONE \rangle_i) = \mathsf{MUX}((\langle x \rangle_i, \langle cP \rangle_i), (\langle |x| \rangle_i, \langle cN \rangle_i), \langle s\rangle_i)$

\For{$j = 0;\ j < \numbit;\ j = j + 1$}{
   $\langle \hat{x}^{\numbit - j} \rangle_i = (\langle \hat{x} \rangle_i << j)$
}

$\langle M \rangle_i = \mathsf{MSB}((\langle \hat{x}^{64} \rangle_i, \langle \hat{x}^{63} \rangle_i, \ldots, \langle \hat{x}^{1} \rangle_i))$

$\langle cACT \rangle_i = \mathsf{MUX}(\langle cONE \rangle_i, \langle cZERO \rangle_i, \langle M \rangle_i)$

$\langle z \rangle_i = \langle cACT \rangle_i$ \;
\For{$j = 0;\ j < log_2(\numbit);\ j = j + 1$}{
   $\langle z \rangle_i = \mathsf{MUL}(\langle z[0:len(z)/2] \rangle_i, \langle z[len(z)/2:len(z)] \rangle_i)$
}
}
\captionsetup{width=\linewidth}
\caption{Exponential computation ($\mathsf{EXP}$)}
\label{alg:exp}
\end{algorithm}

\subsection{Inverse Square Root of Gram Matrix (INVSQRT)}

As a unique feature, \fw{} provides a more sophisticated and special operation, namely the inverse square root of a secret shared Gram matrix, whose pseudocode is given in Algorithm \ref{alg:invsqrt}. To the best of our knowledge, \fw{} is the first study in the literature offering such a functionality.
In $\mathsf{INVSQRT}$, we follow the idea of computing the reciprocal of the square root of the eigenvalues of the Gram matrix and constructing the desired matrix using the diagonal matrix of these reciprocals of the square root of the eigenvalues and the original eigenvectors. For the privacy preserving eigenvalue decomposition, we were inspired by the approach proposed by \citet{zhou2016outsourcing}, where the eigenvalue decomposition was designed as an outsourcing operation from a single party. We adapted this approach to the multi-party scenario so that the proxies with a share of Gram matrix can perform the eigenvalue decomposition without learning anything other than their share of the outcome. 

Let $i \in \{0,1\}$, $\G_i$ be the share of the Gram matrix in $\party_i$, $\M$ and $(\alp, \s)$ be a common orthogonal matrix and common scalars, respectively, known by both proxies. $\party_i$ first masks its share of the Gram matrix by computing $\G^{'}_{i} = \M (\alp \G_i + \s I) \M^T$, where $I$ is the identity matrix, and then sends it to the helper party, or $\party_2$. First, $\party_2$ reconstructs $\G^{'}$ and then performs an eigenvalue decomposition on $\G^{'}$. It obtains the masked versions of the original eigenvalues, represented as $\evl^{'}$, and eigenvectors, represented as $\evc^{'}$. After splitting $\evc^{'}$ into two secret shares, $\party_2$ generates a vector of random values $\Delta$ and a random scalar $\alpha$, and sends them to $\party_1$ along with $\evc^{'}_1$. Once $\party_1$ has received $\evc^{'}_1$, $\Delta$ and $\alpha$, it computes $\evc_1 = \M^T \evc^{'}_1$ and the vector $U = \s \Delta + \alpha$, which will act as \textit{partial unmasker}. Afterwards, $\party_1$ sends $U$ to $\party_0$. The final task of $\party_2$ is to mask the masked eigenvalues by computing $\evl'' = \evl' \odot \Delta + \alpha$ and to send it along with $\evc^{'}_0$ to $\party_0$. Once $\party_0$ receives $\evc^{'}_0$, $U$ and $\evl''$, it computes $\evc_0 = \M^T \evc^{'}_0$ and $\evl''' = (\evl'' - U) / \alp$. To obtain the shares of the reciprocal of the square root of the original eigenvalues, i.e. $\evl^{-1/2}$, $\party_0$ and $\party_1$ perform private multiplication with inputs $((\evl''')^{-1/2}, 0)$ and $(0, \Delta^{1/2})$, respectively. The final step for the proxies to obtain the share of the inverse square root of $\G$ is to reconstruct it by computing $\G_i^{-1/2} = \evc_i \cdot diag(\evl^{-1/2}) \cdot \evc_{i}^{T}$ for $i \in \{0,1\}$.

Note that $\mathsf{INVSQRT}$ is a private method with a high probability, since $\M$, $\alp$, $\s$, $\Delta$ and $\alpha$ must be in a problem specific range not to cause overflow in the masking procedures and loss of masked values' connection to the real numbers. This restriction affects the feature of complete randomness of the shares in MPC. Based on the resulting values from the computation, the parties can reduce the range of secret values, which in this case are the values in $\G$, $\evc$ and $\evl$, to a range slightly smaller than the original range. However, to perform private inference on RKNs, the inverse square root of the required Gram matrices can be outsourced along with the other parameters of the model and the randomness regarding the aforementioned values is completely preserved. The reason of including $\mathsf{INVSQRT}$ here is to initiate the effort of realizing such a computation, which is required if one wishes to train an RKN or other algorithms requiring this operation via MPC. Main contribution of this paper, i.e. the realization of private inference on RKNs via efficient and/or previously unaddressed functions such as the exact exponential, is still valid even if we use the outsourced inverse square root of the Gram matrices. We discuss further details in Supplement.

\begin{algorithm}[!htb]
\footnotesize
\DontPrintSemicolon
\SetKwInOut{Input}{input}\SetKwInOut{Output}{output}
\SetKwFunction{algo}{$\mathsf{INVSQRT}$}
\SetKwProg{myalg}{Algorithm}{}{}
\myalg{\algo{}}{
\Input{$\party_0$ and $\party_1$ hold the share of Gram matrix $\langle G \rangle_0$ and $\langle G \rangle_1$, respectively, and common random scalars $\s$ and $\alp$, and a common random matrix $\M$}   
\Output{$\party_0$ and $\party_1$ get $\langle G_{invsqrt} \rangle_0$ and $\langle G_{invsqrt} \rangle_1$, respectively}

$\party_i$ locally computes $\langle G' \rangle_i = \M (\alp \langle G \rangle_i + \s I) \M^T$ and sends $\langle G' \rangle_i$ to $\party_2$  for $i \in \{0,1 \}$ \;

$\party_2$ computes  $G' = \langle G' \rangle_0 + \langle G' \rangle_1$ to reconstruct the masked Gram matrix \;

$\party_2$ computes $(\evl', \evc') = localEIG(G')$ to locally perform eigenvalue decomposition \;

$\party_2$ splits the masked eigenvectors $\evc'$ into two shares $\langle \evc' \rangle_0$ and $\langle \evc' \rangle_1$ \;

$\party_2$ generates a vector of random values $\Delta$ and a random scalar $\alpha$, and sends $\Delta$, $\alpha$ and $\langle \evc' \rangle_1$ to $\party_1$\;

$\party_2$ locally computes $\evl'' = (\evl' \odot \Delta + \alpha)$ to mask the masked eigenvalues and sends $\evl''$ and $\langle \evc' \rangle_0$ to $\party_0$ \;

$\party_1$ computes $\langle \evc \rangle_1 = \M^T \langle \evc' \rangle_1$ and $U = \s \Delta + \alpha$ \;

$\party_1$ sends $U$ to $\party_0$ \;

$\party_0$ computes $\langle \evc \rangle_0 = \M^T \langle \evc' \rangle_0$ and $\evl''' = (\evl'' - U) / \alp$ \;

$\party_0$ and $\party_1$ call $\mathsf{MUL}$ with inputs $(((\evl''')^{-1/2}, 0)$ and $(0, \Delta^{1/2}))$, respectively, to obtain $\langle \evl \rangle_i$ \;

$\party_i$ computes $\langle G_{invsqrt} \rangle_i =  \mathsf{MUL}( \mathsf{MUL}( \langle \evc \rangle_i, diag(\langle \evl \rangle_i)), \langle \evc \rangle_i)$
}
\captionsetup{width=\linewidth}
\caption{Inverse Square Root of a Gram Matrix ($\mathsf{INVSQRT}$)}
\label{alg:invsqrt}
\end{algorithm}

\section{Privacy Preserving RKNs (\app{})}
In this section, we explain the procedure of the privacy preserving inference on RKNs (\app{}) via \fw{}.

\subsection{Outsourcing}
The first step for the private inference is to outsource the model parameters from the model owner and the test sample from the data owner to the proxies. In the outsourcing of the model parameters, which are the anchor point matrix, the linear classifier weights and the inverse square root of the Gram matrices of the anchor points if one wants to have the full randomness, the model owner splits them into two arithmetic shares and sends them to the proxies in such a way that each proxy has a single share of each parameter. To outsource the test samples, the data owner proceeds similarly after using one-hot encoding to convert a sequence into a vector of numbers. It divides this vector into two shares and sends them to the proxies.

\subsection{Private Inference}
After outsourcing the model parameters and the test sample, we use the building blocks of \fw{} to realize the private inference on a pre-trained RKN. Let $t \in \{1,\ldots,\seq\}$ be the index of the characters in the sequence. The first step is to compute the similarity of the one-hot encoded $t$-th character of the sequence to each character of each anchor point via Equation \ref{eq:sim}. Such a similarity calculation involves the dot product of two secret shared vectors, the subtraction of a plaintext scalar value from a secret shared value, the multiplication of a plaintext scalar value by a secret shared value and the exponential of a known base raised to the power of a secret shared value. The output of this process corresponds to the component $b_j[t]$ in Equation \ref{eq:iter}. Once the similarity computation is complete, the proxies proceed with the private elemenent-wise product between $b_j[t]$ and $c_{j-1}[t-1]$, which is the initial mapping of the sequence up to the $(t-1)$-th character based on the anchor points of length $(j-1)$. Then the proxies add the result of the element-wise product with the downgraded $c_j[t-1]$ with a plaintext scalar value $\lambda$. At the end of this computation, the proxies obtain the secret shared initial mapping of the sequence $c_j[t]$ up to the $t$-th character to $\numanc$-dimensional space based on each anchor point of length $j \in \{1,\ldots,\kmer$\}.

After computing the initial mapping of the sequence up to its total length, that is obtaining $c_k[s]$, the proxies either compute the inverse square root of the Gram matrices of the anchor points up to each length of the anchor points via $\mathsf{INVSQRT}$, which is $K_{Z_j Z_j}^{-1/2}$ where $Z_j$ is the anchor points up to the $j$-th character, or use the inverse square root of the Gram matrices directly if they are outsourced. To obtain the final mapping, they multiply these matrices by the corresponding initial mapping vectors of the sequence. Afterwards, the proxies perform the private dot product computation of two secret shared vectors, which are the weights of the classifier and the mapping of the sequence. In the end, they obtain the prediction of the \app{} for the given sequence in a secret shared form. These shares can then be sent back to the owner of the data enabling the reconstruction of the prediction of the \app{}.

\section{Security Analysis} \label{sec:sec_analysis}
The security analyses of the methods of \fw{} and the protocol \app{} are given in the Supplement. For the security analysis of the adapted functions from other studies, we kindly refer the reader to the corresponding papers.



\begin{table}[!htb]
    \scriptsize
    \centering
    \renewcommand{\arraystretch}{1.0}
    \begin{tabularx}{\linewidth}{ABAB}
        \toprule
         Protocol & RC & Protocol & RC \\
        \midrule
        MUL & 2 & MUX & 2 \\
        MOC & 4 & DP & 2 \\
        MSB & 4 & EXP & 24 \\
        CMP & 4 & INVSQRT & 15 \\
        \bottomrule
    \end{tabularx}
    \caption{Round complexities (RC) of \fw{}}
    \label{tab:complexity}
\end{table}

\section{Complexity Analysis of the Framework}

We analyze the communication round complexities (RC) of the methods in \fw{}. Table \ref{tab:complexity} gives RCs required by each method in \fw{} to perform the designated operation. Note that we also give the analysis of the methods that we adapted from the other studies to give the comprehensive view of \fw{}.


\section{Results}

\subsection{Dataset}
To replicate the experiments of the RKN and show the correctness of \app{}, we utilized the same dataset as \citet{chen2019recurrent}, namely Structural Classification of Proteins (SCOP) version 1.67 \cite{murzin1995scop}. It contains $85$ fold recognition tasks with positive or negative label, and protein sequences of varying lengths.

We also created two synthetic test sets for the scalability analysis. One of them contains $5$ sequences of length $128$. We used this test set to evaluate the effects of the hyperparameters $\numanc$ and $\kmer$ on the execution time of \fw{}. The second test set consists of $5$ sequences of different lengths to evaluate the effect of sequence length on execution time.

\subsection{Experimental Setup}
To conduct experiments, we used Amazon EC2 t2.xlarge instances. For the LAN setting, we selected all instances from the Frankfurt region, and for the WAN setting, we selected additional instances from London and Paris. We represent numbers with $64$ bits whose least significant $20$ bits are used for the fractional part. When computing the final mapping of the sequence, we chose to use $\mathsf{INVSQRT}$ to compute the inverse square root of the Gram matrices. Since the runtime of the experiments using the outsourced inverse square root of the Gram matrices is significantly shorter, we wanted to provide the worst case runtime results by using $\mathsf{INVSQRT}$.

\subsection{Correctness Analysis}
We conducted the experiments to analyze the correctness of the \app{} on the LAN. We performed the predictions of test samples of a task of SCOP to verify that we can get the same predictions as with RKN in plaintext. We selected the first task and trained a different RKN model for each combination of the parameters $\numanc \in \{16, 32, 64, 128\}$ and $\kmer \in \{5, 7, 10\}$. We randomly selected $5$ sequences from the test samples utilized by the RKN in this task and input these sequences into \app{}. Then, \app{} privately performed the prediction of these test samples on the secret shared model. At the end of this process, we received the shares of the predictions from the proxies and performed the reconstruction to obtain these predictions in plaintext. When we compared the predictions of \app{} with the predictions of plaintext RKN for these randomly selected test samples, the largest absolute difference between the corresponding predictions is less than $2 \times 10^{-5}$. Such close predictions of \app{} suggest that \fw{} can correctly perform almost the same predictions as RKN without sacrificing the privacy of the test samples or the model.  

There are two reasons for the small deviation in prediction results. The first is the truncation operation at the end of the multiplication to preserve the format of the resulting number \cite{mohassel2017secureml}. Although this causes only a small error for a single multiplication, it can add up for thousands of multiplications. Second, the utilized number format has inherently limited precision and this could cause a small error between the representation of the value in our format and the actual value itself. Such an error could also accumulate during the calculation and lead to a relatively high error in the end. Depending on the problem, one can address these problems by setting the number of bits for the fractional part to a higher number to increase the precision and minimize the overall error.

\begin{figure}[ht]
    \centering
    \begin{subfigure}[b]{0.232\textwidth}
         \centering
         \includegraphics[width=\textwidth]{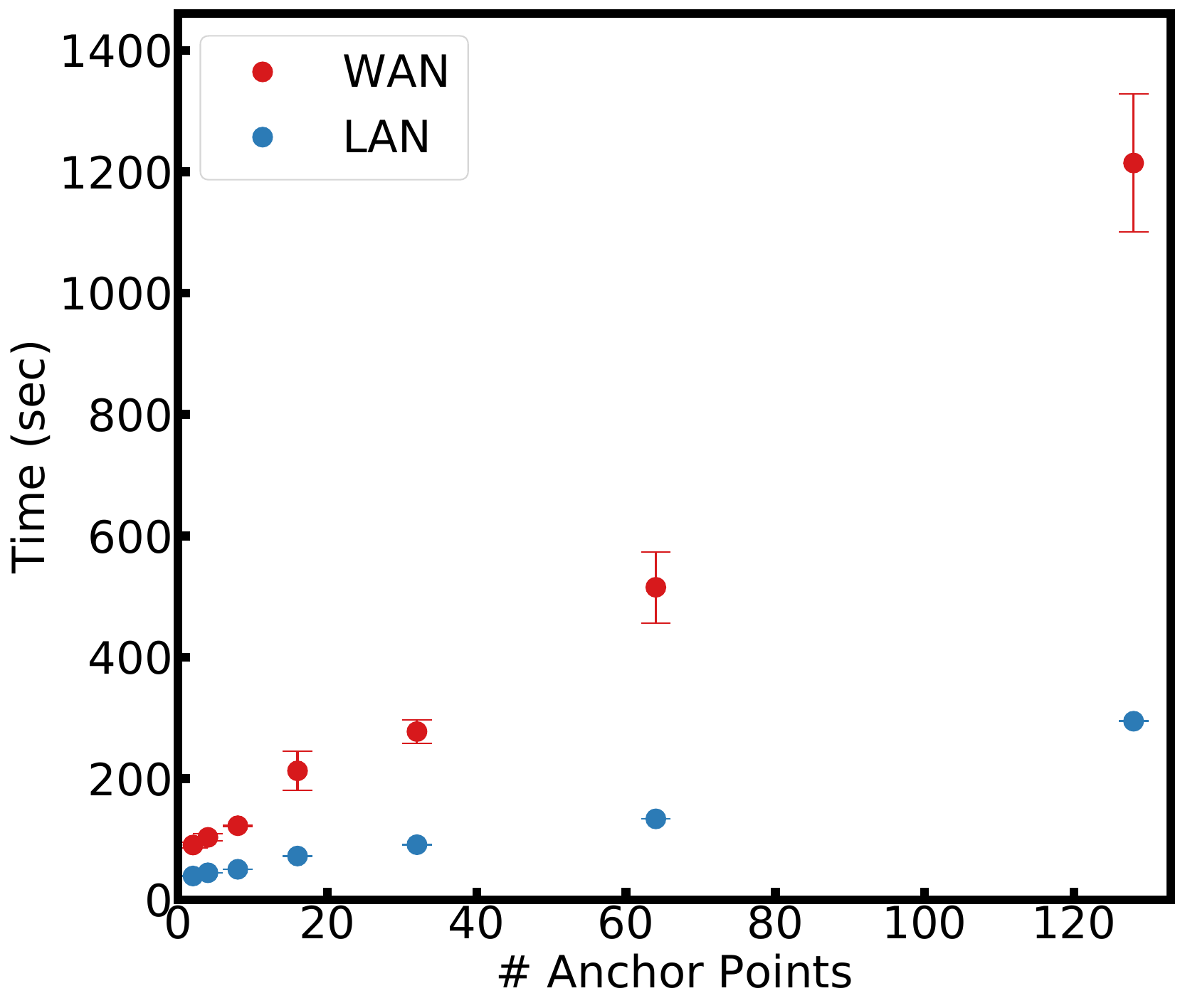}
         \caption{}
         \label{fig:anc}
     \end{subfigure} 
    \begin{subfigure}[b]{0.227\textwidth}
         \centering
         \includegraphics[width=\textwidth]{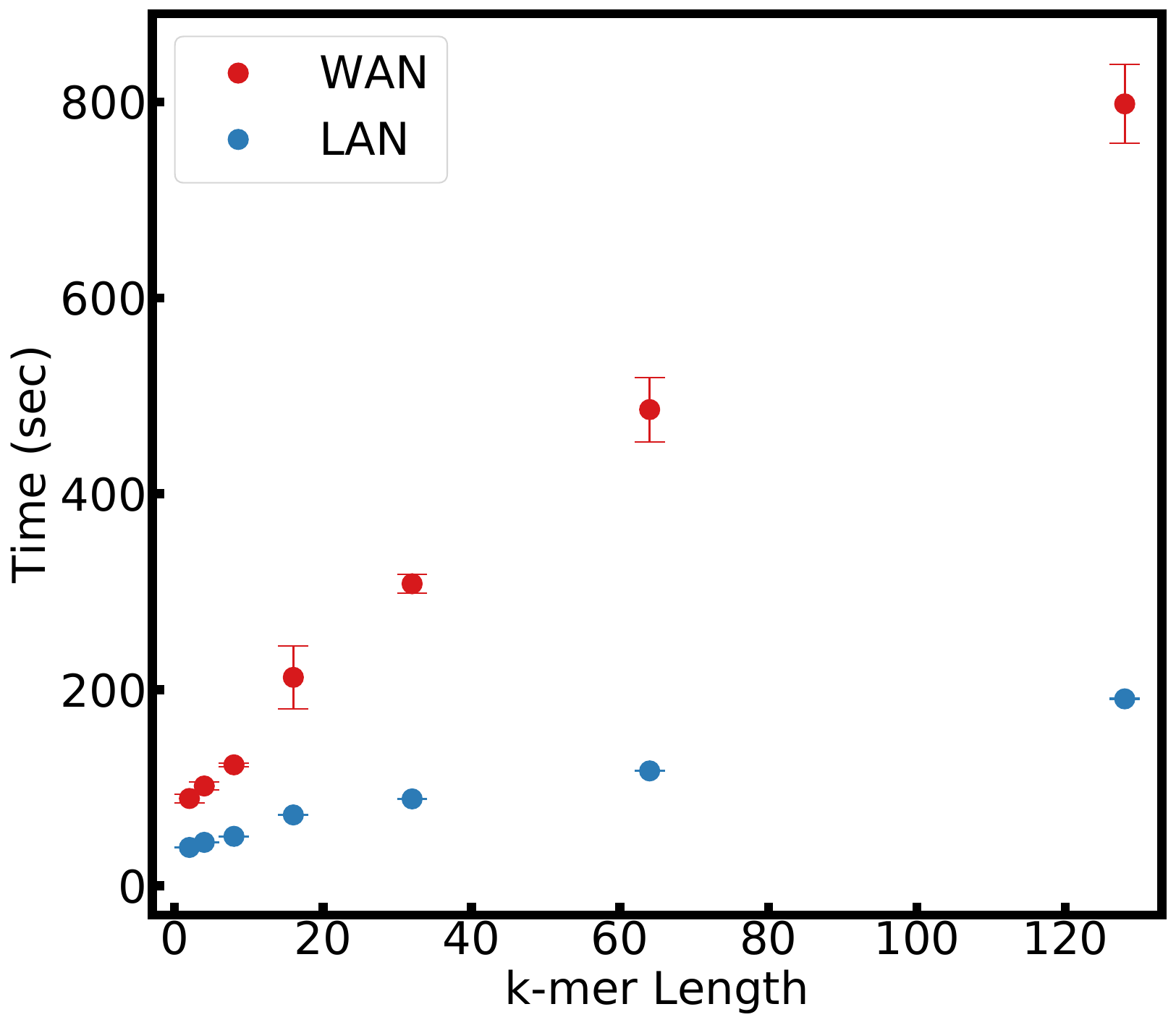}
         \caption{}
         \label{fig:kmer}
     \end{subfigure} \\
    \begin{subfigure}[b]{0.228\textwidth}
         \centering
         \includegraphics[width=\textwidth]{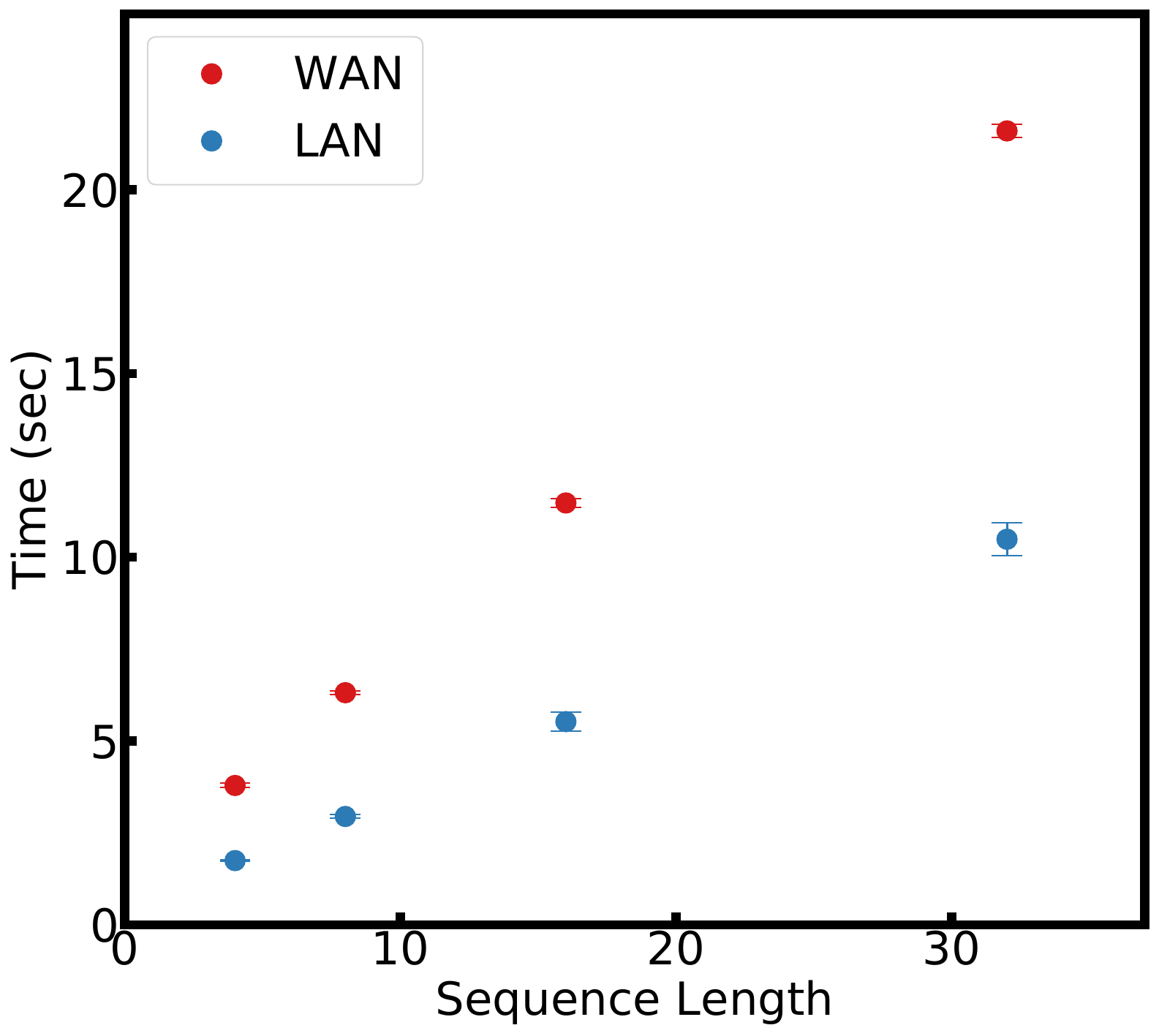}
         \caption{}
         \label{fig:seq}
     \end{subfigure}
    \caption{The results of the execution time analysis of \fw{} on both WAN and LAN settings for varying \textbf{(a)} number of anchor points for a fixed k-mer length and sequence length, \textbf{(b)} length of k-mers for a fixed number of anchor points and sequence length, and \textbf{(c)} length of sequences for a fixed number of anchor points and length of k-mer.}
    \label{fig:results}
\end{figure}

\subsection{Execution Time Analysis}
We also examined the effects of the parameters of the RKN, namely the number of anchor points, the length of the k-mers and the length of the sequence, on the execution time of \fw{} on both LAN and WAN. For this purpose, we used two synthetic datasets. In the analysis of a parameter, we vary that parameters and fixed the rest. Figure \ref{fig:results} summarizes the results and demonstrates the linear trend in the execution time of \fw{} for different parameters.


\section{Conclusion}
In this study, we propose a new comprehensive secure framework, called \fw{}, based on three party computation to enable privacy preserving computation of complex functions utilized in machine learning algorithms. Some of these functions have not been addressed before such as the exact exponential computation and the inverse square root of a Gram matrix and some of them had been solved either inefficiently or using a different methodology. We used \fw{} to realize the privacy preserving inference on a pre-trained RKN. To the best of our knowledge, \fw{} is the first study performing such an inference on RKNs. We demonstrate the correctness of \fw{} on the LAN using the tasks of the original RKN paper. The experiments demonstrate that \fw{} can give almost the exact same prediction result as one could obtain from the RKN without privacy. This demonstrates the correctness of the proposed methods. Moreover, we evaluated the performance of \fw{} in terms of execution time on two synthetic datasets. We carried out these analyses on both WAN and LAN, and showed that \fw{} scales linearly with the length of the k-mers and the length of the input sequence, and almost linearly with the number of anchor points. Note that \fw{} is not just limited to the privacy preserving inference on RKNs but also capable of realizing other machine learning algorithms requiring the provided building blocks. In future work, building on \fw{}, we plan to address the privacy preserving training of the RKN and other state-of-the-art machine learning algorithms. We also aim to improve the security of $\mathsf{INVSQRT}$ so that we do not lose the complete randomness of MPC in the computation of the inverse square root of a secret shared Gram matrix.

\bibliography{main}

\begin{thebibliography}{25}
\providecommand{\natexlab}[1]{#1}

\bibitem[{Abadi et~al.(2016)Abadi, Chu, Goodfellow, McMahan, Mironov, Talwar,
  and Zhang}]{abadi2016deep}
Abadi, M.; Chu, A.; Goodfellow, I.; McMahan, H.~B.; Mironov, I.; Talwar, K.;
  and Zhang, L. 2016.
\newblock Deep learning with differential privacy.
\newblock In \emph{Proceedings of the 2016 ACM SIGSAC conference on computer
  and communications security}, 308--318.

\bibitem[{Applebaum(2017)}]{applebaum2017garbled}
Applebaum, B. 2017.
\newblock Garbled circuits as randomized encodings of functions: a primer.
\newblock In \emph{Tutorials on the Foundations of Cryptography}, 1--44.
  Springer.

\bibitem[{Bakshi and Last(2020)}]{bakshi2020cryptornn}
Bakshi, M.; and Last, M. 2020.
\newblock CryptoRNN-Privacy-Preserving Recurrent Neural Networks Using
  Homomorphic Encryption.
\newblock In \emph{International Symposium on Cyber Security Cryptography and
  Machine Learning}, 245--253. Springer.

\bibitem[{Beaver(1991)}]{DBLP:conf/crypto/Beaver91a}
Beaver, D. 1991.
\newblock {Efficient Multiparty Protocols Using Circuit Randomization}.
\newblock In \emph{Advances in Cryptology - {CRYPTO} '91, 11th Annual
  International Cryptology Conference, Santa Barbara, California, USA, August
  11-15, 1991, Proceedings}, 420--432.

\bibitem[{Chen, Jacob, and Mairal(2019)}]{chen2019recurrent}
Chen, D.; Jacob, L.; and Mairal, J. 2019.
\newblock {Recurrent Kernel Networks}.
\newblock \emph{arXiv preprint arXiv:1906.03200}.

\bibitem[{Chen et~al.(2020)Chen, Fu, Shen, Yu, Wang, and Sun}]{chen2020rnn}
Chen, S.; Fu, A.; Shen, J.; Yu, S.; Wang, H.; and Sun, H. 2020.
\newblock RNN-DP: A new differential privacy scheme base on Recurrent Neural
  Network for Dynamic trajectory privacy protection.
\newblock \emph{Journal of Network and Computer Applications}, 168: 102736.

\bibitem[{Damg{\aa}rd et~al.(2012)Damg{\aa}rd, Pastro, Smart, and
  Zakarias}]{damgaard2012multiparty}
Damg{\aa}rd, I.; Pastro, V.; Smart, N.; and Zakarias, S. 2012.
\newblock Multiparty computation from somewhat homomorphic encryption.
\newblock In \emph{Annual Cryptology Conference}, 643--662. Springer.

\bibitem[{Dwork, Roth et~al.(2014)}]{dwork2014algorithmic}
Dwork, C.; Roth, A.; et~al. 2014.
\newblock The algorithmic foundations of differential privacy.
\newblock \emph{Found. Trends Theor. Comput. Sci.}, 9(3-4): 211--407.

\bibitem[{EL-Manzalawy, Dobbs, and Honavar(2008)}]{el2008predicting}
EL-Manzalawy, Y.; Dobbs, D.; and Honavar, V. 2008.
\newblock Predicting linear B-cell epitopes using string kernels.
\newblock \emph{Journal of Molecular Recognition: An Interdisciplinary
  Journal}, 21(4): 243--255.

\bibitem[{Gilad-Bachrach et~al.(2016)Gilad-Bachrach, Dowlin, Laine, Lauter,
  Naehrig, and Wernsing}]{gilad2016cryptonets}
Gilad-Bachrach, R.; Dowlin, N.; Laine, K.; Lauter, K.; Naehrig, M.; and
  Wernsing, J. 2016.
\newblock Cryptonets: Applying neural networks to encrypted data with high
  throughput and accuracy.
\newblock In \emph{International conference on machine learning}, 201--210.
  PMLR.

\bibitem[{Hesamifard, Takabi, and Ghasemi(2017)}]{hesamifard2017cryptodl}
Hesamifard, E.; Takabi, H.; and Ghasemi, M. 2017.
\newblock Cryptodl: Deep neural networks over encrypted data.
\newblock \emph{arXiv preprint arXiv:1711.05189}.

\bibitem[{Hochreiter, Heusel, and Obermayer(2007)}]{hochreiter2007fast}
Hochreiter, S.; Heusel, M.; and Obermayer, K. 2007.
\newblock Fast model-based protein homology detection without alignment.
\newblock \emph{Bioinformatics}, 23(14): 1728--1736.

\bibitem[{Huang et~al.(2022)Huang, Lu, Hong, and Ding}]{huang2022cheetah}
Huang, Z.; Lu, W.-j.; Hong, C.; and Ding, J. 2022.
\newblock Cheetah: Lean and Fast Secure Two-Party Deep Neural Network
  Inference.
\newblock \emph{IACR Cryptol. ePrint Arch.}, 2022: 207.

\bibitem[{Knott et~al.(2021)Knott, Venkataraman, Hannun, Sengupta, Ibrahim, and
  van~der Maaten}]{knott2021crypten}
Knott, B.; Venkataraman, S.; Hannun, A.; Sengupta, S.; Ibrahim, M.; and van~der
  Maaten, L. 2021.
\newblock CrypTen: Secure multi-party computation meets machine learning.
\newblock \emph{arXiv preprint arXiv:2109.00984}.

\bibitem[{Leslie, Eskin, and Noble(2001)}]{leslie2001spectrum}
Leslie, C.; Eskin, E.; and Noble, W.~S. 2001.
\newblock The spectrum kernel: A string kernel for SVM protein classification.
\newblock In \emph{Biocomputing 2002}, 564--575. World Scientific.

\bibitem[{Lu et~al.(2021)Lu, Huang, Hong, Ma, and Qu}]{lu2021pegasus}
Lu, W.-j.; Huang, Z.; Hong, C.; Ma, Y.; and Qu, H. 2021.
\newblock PEGASUS: bridging polynomial and non-polynomial evaluations in
  homomorphic encryption.
\newblock In \emph{2021 IEEE Symposium on Security and Privacy (SP)},
  1057--1073. IEEE.

\bibitem[{Mishra et~al.(2020)Mishra, Lehmkuhl, Srinivasan, Zheng, and
  Popa}]{mishra2020delphi}
Mishra, P.; Lehmkuhl, R.; Srinivasan, A.; Zheng, W.; and Popa, R.~A. 2020.
\newblock Delphi: A cryptographic inference service for neural networks.
\newblock In \emph{29th USENIX Security Symposium (USENIX Security 20)},
  2505--2522.

\bibitem[{Mohassel and Zhang(2017)}]{mohassel2017secureml}
Mohassel, P.; and Zhang, Y. 2017.
\newblock {SecureML: A System for Scalable Privacy-Preserving Machine
  Learning}.
\newblock In \emph{2017 IEEE symposium on security and privacy (SP)}, 19--38.
  IEEE.

\bibitem[{Murzin et~al.(1995)Murzin, Brenner, Hubbard, and
  Chothia}]{murzin1995scop}
Murzin, A.~G.; Brenner, S.~E.; Hubbard, T.; and Chothia, C. 1995.
\newblock SCOP: a structural classification of proteins database for the
  investigation of sequences and structures.
\newblock \emph{Journal of molecular biology}, 247(4): 536--540.

\bibitem[{Nojoomi and Koehl(2017)}]{nojoomi2017weighted}
Nojoomi, S.; and Koehl, P. 2017.
\newblock A weighted string kernel for protein fold recognition.
\newblock \emph{BMC bioinformatics}, 18(1): 1--14.

\bibitem[{Patra et~al.(2021)Patra, Schneider, Suresh, and
  Yalame}]{patra2021aby2}
Patra, A.; Schneider, T.; Suresh, A.; and Yalame, H. 2021.
\newblock $\{$ABY2. 0$\}$: Improved $\{$Mixed-Protocol$\}$ Secure
  $\{$Two-Party$\}$ Computation.
\newblock In \emph{30th USENIX Security Symposium (USENIX Security 21)},
  2165--2182.

\bibitem[{Rathee et~al.(2020)Rathee, Rathee, Kumar, Chandran, Gupta, Rastogi,
  and Sharma}]{rathee2020cryptflow2}
Rathee, D.; Rathee, M.; Kumar, N.; Chandran, N.; Gupta, D.; Rastogi, A.; and
  Sharma, R. 2020.
\newblock CrypTFlow2: Practical 2-party secure inference.
\newblock In \emph{Proceedings of the 2020 ACM SIGSAC Conference on Computer
  and Communications Security}, 325--342.

\bibitem[{Wagh, Gupta, and Chandran(2019)}]{wagh2019securenn}
Wagh, S.; Gupta, D.; and Chandran, N. 2019.
\newblock {SecureNN: 3-Party Secure Computation for Neural Network Training.}
\newblock \emph{Proc. Priv. Enhancing Technol.}, 2019(3): 26--49.

\bibitem[{Wagh et~al.(2020)Wagh, Tople, Benhamouda, Kushilevitz, Mittal, and
  Rabin}]{wagh2020falcon}
Wagh, S.; Tople, S.; Benhamouda, F.; Kushilevitz, E.; Mittal, P.; and Rabin, T.
  2020.
\newblock Falcon: Honest-majority maliciously secure framework for private deep
  learning.
\newblock \emph{arXiv preprint arXiv:2004.02229}.

\bibitem[{Zhou and Li(2016)}]{zhou2016outsourcing}
Zhou, L.; and Li, C. 2016.
\newblock {Outsourcing Eigen-Decomposition and Singular Value Decomposition of
  Large Matrix to a Public Cloud}.
\newblock \emph{IEEE Access}, 4: 869--879.

\end{thebibliography}


\section{Supplement}
\subsection{Number Format}

To illustrate the number format, let $\numbit$ be the number of bits to represent numbers, $\dec$ be the number of bits allocated for the fractional part and $\mathbb{\setnf}$ be the set of values that can be represented in this number format, one can convert $x \in \mathbb{R}$ to $\repnf{x} \in \mathbb{\setnf}$ as follows:
\begin{equation}
    \repnf{x} = \left\{
        \begin{array}{ll}
            \floor{x * 2^\dec} & \quad x \geq 0 \\
            2^\numbit - \floor{|x * 2^\dec|} & \quad x < 0
        \end{array}
    \right.
\end{equation}
For example, $x = 3.42$ is represented as $\repnf{x} = 112066$, which, if you omit the leading zeros, is $1101101011100001010001$ in binary. The two most significant bits, the $21$st and $22$nd bits, are used to represent $3$ and the remainder represents $0.42$.

The choice of $\dec$ depends on the task for which \fw{} is used. If the numbers in the task are mostly small or they need to be as accurate as possible, the precision of the representation of the numbers is crucial. In such a case, a higher value for $\dec$ is required to allow for more decimal places of the original value, which in turn requires sacrifices in the upper limit that can be calculated by the building blocks of \fw{}. However, if the numbers that appear during the process are large, $\dec$ must be set to lower values to allow the integer part to represent larger numbers with fewer decimal places.

In our experiments with \fw{}, we represent the numbers with $64$ bits and set $\dec$ to $20$ to have a relatively high precision in representing the numbers and a sufficient range for the exponential. Although we have used the $64$ bit representation in the explanations of \fw{}'s functions, \fw{} can also work with a different number of bit representations. The reason for our choice is to take advantage of the natural modular operation of most modern CPUs, which operate with $64$ bits. In this way, we avoid performing modular operations after each arithmetic operation performed on the shares. However, one can still use a larger or smaller number of bits on $64$ bit CPUs by performing the modular operations manually.

\subsection{Discussion on MSB}
We give the detailed pseudocode of $\mathsf{MSB}$ in Algorithm \ref{alg:msb} in the main paper. Since it is optimized to reduce the communication round by integrating $\mathsf{MOC}$, in this section we give the compact version of the pseudocode of it in Algorithm \ref{alg:compact_msb} in the Supplement. We also give the security analysis of $\mathsf{MSB}$ based on this compact version to make it easy to follow.

\begin{algorithm}[!htb]
\footnotesize
\DontPrintSemicolon
\SetKwInOut{Input}{input}\SetKwInOut{Output}{output}
\SetKwFunction{algo}{$\mathsf{MSB}$}
\SetKwProg{myalg}{Algorithm}{}{}
\myalg{\algo{}}{
\Input{$\party_0$ and $\party_1$ hold ($\langle x\rangle_0$) and ($\langle x\rangle_1$), respectively}   
\Output{$\party_0$ and $\party_1$ get $\langle z\rangle_0$ and $\langle z\rangle_1$, respectively, where $z$ is equal to $0$ if the most significant bit of $x$ is $0$ and $1$ otherwise.}
$\party_0$ and $\party_1$ hold a common random bit $f$\;
For each $i \in \{0, 1\}$, $\party_i$ executes Steps $4$-$9$.\;
$\langle d\rangle_i^\ringB = \langle x\rangle_i \mod \ringB$\;
$\langle d\rangle_i=\mathsf{MOC}(\langle d\rangle_i^\ringB)$.\;
$\langle z\rangle_i = \langle x\rangle_i - \langle d\rangle_i$.\;
$\langle a[0]\rangle_i = if\ringB-\langle z\rangle_i$\; 
$\langle a[1]\rangle_i = i(1-f)\ringB-\langle z\rangle_i$\; 
$\party_i$ sends $\langle a\rangle_i$ to $\party_2$\;
$\party_2$ reconstructs $a[j]$ where $j \in \{0, 1\}$ and computes $a[j]=a[j]/\ringB$\;
$\party_2$ creates two fresh shares of $a[j]$ where $j \in \{0, 1\}$ sends them to $\party_0$ and $\party_1$\;
For each $i \in \{0, 1\}$, $\party_i$ executes Step $13$\;
$\langle z\rangle_i = \langle a[f]\rangle_i$\;
}
\captionsetup{width=\linewidth}
\caption{Compact and unoptimized version of Most Significant Bit ($\mathsf{MSB}$)}
\label{alg:compact_msb}
\end{algorithm}

\subsection{$\mathsf{MUL}$ and Details of Truncation}
Let $i \in \{0,1\}$ be and $\langle c\rangle_i = \langle a\rangle_i \cdot \langle b\rangle_i $ be the shares of the multiplication triple and $\langle x\rangle_i$ and $\langle y\rangle_i$ be the shares of $x$ and $y$, respectively, in $\party_i$. To compute $\langle z\rangle = \langle x\rangle \cdot \langle y\rangle $, $\party_i$ first computes $\langle e\rangle_i = \langle x\rangle_i - \langle a\rangle_i $ and $\langle f\rangle_i = \langle y\rangle_i - \langle b\rangle_i $, and then sends them to $\party_{1-i}$. Afterwards, $\party_i$ reconstructs $e$ and $f$ and calculates $\langle z\rangle_i = i\cdot e \cdot f + f \cdot \langle a\rangle_i + e \cdot \langle b\rangle_i + \langle c\rangle_i$. In the last step to obtain the shares of the multiplication, $\party_0$ shifts $z_0$ to the right by $f$. Slightly differently, $\party_1$ first shifts $(-1 * z_1)$ to the right by $f$ and then multiplies the result by $-1$.


\subsection{Details of Security Analysis}
In this part, we give the security analysis of $\mathsf{MOC}$, $\mathsf{MSB}$, $\mathsf{CMP}$, $\mathsf{MUX}$, $\mathsf{EXP}$ and $\mathsf{INVSQRT}$. In the security analysis of some of these methods, we take the security of the private compare functionality $\mathcal{F}_{\mathsf{PC}}$ \citep{wagh2019securenn} as base. We kindly refer the readers to \citet{wagh2019securenn} for the security analysis of $\mathcal{F}_{\mathsf{PC}}$.

\begin{lemma}
\label{lemma:mc}
The protocol $\mathsf{MOC}$ in Algorithm \ref{alg:mc} in the main paper securely realizes the functionality $\mathcal{F}_{MOC}$ in $\mathcal{F}_{\mathsf{PC}}$ hybrid model.  
\end{lemma}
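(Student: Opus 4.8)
The plan is to prove the lemma in the standard real/ideal simulation paradigm for the honest-but-curious model, constructing a separate simulator for each of the three parties a semi-honest adversary might corrupt and showing that each simulated view is indistinguishable from the real view in the $\mathcal{F}_{\mathsf{PC}}$-hybrid model. Since ``securely realizes'' subsumes correctness, I would first verify the arithmetic identity that the protocol outputs fresh shares over $\mathbb{Z}_{\ringA}$ of the same $x$ held over $\mathbb{Z}_{\ringB}$, with $\ringA = 2\ringB$. The bookkeeping tracks two wrap events: the bit $w = \text{isWrap}(\langle r\rangle_0^{\ringB}, \langle r\rangle_1^{\ringB}, \ringB)$ recording whether the reconstruction of the mask $r$ wraps modulo $\ringB$, and the bit $u$ recovered from $\mathsf{PC}$ recording whether $y = x + r$ wraps modulo $\ringB$. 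I would show that $c = w \oplus u$ is exactly the correction needed so that $\langle x\rangle_0 + \langle x\rangle_1 \equiv x \pmod{\ringA}$ after the lift from $\mathbb{Z}_{\ringB}$ to $\mathbb{Z}_{\ringA}$, using that the $u^\prime$-masking applied to the $\mathsf{PC}$ output is removed by the subsequent $\oplus u^\prime$ step and that the shares of $r$ are lifted consistently.

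For the privacy part, the key structural observation is that the only message whose value depends on the secret $x$ is the reconstructed $y = x + r \bmod \ringB$ obtained in the share-exchange step, together with the $\mathsf{PC}$ output, which in the hybrid model is supplied by $\mathcal{F}_{\mathsf{PC}}$. I would treat the three corruption cases as follows. For the helper $\party_2$, every message it receives within $\mathsf{MOC}$ proper is independent of $x$ (it only generates and transmits randomness), so $\mathsf{Sim}_2$ runs the honest generation code with fresh randomness and defers any view arising inside the comparison sub-protocol to the simulator guaranteed by $\mathcal{F}_{\mathsf{PC}}$. For a proxy, say $\party_0$ (the case of $\party_1$ is symmetric, up to the asymmetric local correction steps that only $\party_0$ performs), the simulator is given $\party_0$'s input share and its prescribed output share; it samples $y$ uniformly over $\mathbb{Z}_{\ringB}$, samples the messages from $\party_2$ (namely $\langle r\rangle_0^{\ringB}$, the bit-shares $\{\langle r[j]\rangle_0^{\ringC}\}$, and $w_0^B$) as fresh random values, and obtains the $\mathsf{PC}$ output bit-share from the $\mathcal{F}_{\mathsf{PC}}$ simulator. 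The uniformity of $y$ follows because $r$ is uniform over $\mathbb{Z}_{\ringB}$ and $\party_0$ holds only one additive share of it, so $r$, and therefore $y = x + r$, is uniform and independent of $x$ in $\party_0$'s view.

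The \emph{main obstacle} is verifying the joint consistency of the simulated view with the fixed output share, rather than the marginal uniformity of each message. Because the final share $\langle x\rangle_0$ is a deterministic function of the reconstructed $y$, the $r$-share, and the correction bit $c_0^B$, and is not independently re-randomized by $\party_2$, I must argue that conditioning on the value of $\langle x\rangle_0$ dictated by $\mathcal{F}_{MOC}$ leaves $y$ and the boolean shares $w_0^B, u_0^B$ jointly distributed exactly as in the real execution. The plan is to exhibit the map from $\party_0$'s internal randomness to the pair (received messages, output share) and check that it is a bijection onto the correct support, so that sampling the messages and reading off the forced output share reproduces the real joint law; here the one-time-pad property of the boolean shares $w_0^B$ and $u_0^B$, each masked by an independent common random bit, guarantees that the constraint imposed by the output does not bias the simulated messages. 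Finally, I would invoke the security of $\mathcal{F}_{\mathsf{PC}}$ to replace the real $\mathsf{PC}$ sub-protocol by its ideal functionality, closing the hybrid argument and concluding indistinguishability in each of the three cases.
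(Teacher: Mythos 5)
Your proposal is correct and follows essentially the same route as the paper's proof: correctness by tracking the two wrap bits $w$ (reconstruction of the mask $r$ wraps) and $u$ (the sum $x+r$ wraps, recovered from $\mathsf{PC}$) and showing that $c = w \oplus u$ is exactly the correction needed after the lift from $\mathbb{Z}_{\ringB}$ to $\mathbb{Z}_{\ringA}$, and security by perfect simulation of each party's view with uniform values in the $\mathcal{F}_{\mathsf{PC}}$-hybrid model. You are in fact more careful than the paper at two points it glosses over: you explicitly include the reconstructed $y = x + r \bmod \ringB$ in the proxy's view and argue its uniformity from the one-time-pad property of $r$, and you address the joint consistency of the simulated messages with the proxy's deterministic output share, whereas the paper's security argument merely lists the received shares ($\langle r\rangle_i^{\ringB}$, the bit-shares over $\mathbb{Z}_{\ringC}$, $w_i^B$, $u_i^B$) and asserts they can be perfectly simulated by random values.
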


\begin{proof}
First, we prove the correctness of our protocol by showing  $(\langle x\rangle_0^\ringB + \langle x\rangle_1^\ringB) \mod \ringB =  (\langle x\rangle_0 + \langle x\rangle_1) \mod \ringA$. In the protocol, $y = (x + r) \mod \ringB$ and $\mathsf{isWrap}(x,r,\ringB) = r \overset{?}{>} y$, that is, $\mathsf{isWrap}(x,r,\ringB)=1$ if $r > y$, $0$ otherwise. At the beginning, $\party_0, \party_1$ and $\party_2$ call $\mathcal{F}_{PC}$ to compute $c=r \overset{?}{>} y$ and $\party_0$ and $\party_1$ obtain the boolean shares $c_0$ and $c_1$, respectively. Besides, $\party_2$ sends also the boolean shares $w_0$ and $w_1$ of $w=\mathsf{isWrap}(\langle r\rangle_0,\langle r\rangle_1,\ringB)$ to $\party_0$ and $\party_1$, respectively. If $\mathsf{isWrap}(\langle y\rangle_0,\langle y\rangle_1,\ringB)$ is $1$ then $\party_0$ adds $\ringB$ to $\langle y\rangle_0$ to change the ring of $y$ from 
$\ringB$ to $\ringA$. To convert $r$ from ring $\ringB$ to ring $\ringA$, $\party_0$ and $\party_1$ add $\ringB$ to their shares of $r$ based on their boolean shares $w_0$ and $w_1$, respectively. If $w_0 = 1$, then $\party_0$ adds $\ringB$ to its $r_1$ and $\party_1$ does the similar with its shares. Later, we need to fix is the summation of $x$ and $r$, that is the value $y$. In case of $x+r \geq \ringB$, we cannot fix the summation value $y$ in ring $\ringA$ by simply converting it from ring $\ringB$ to ring $\ringA$. This summation should be $x+r$ in ring $\ringA$ rather than $(x+r)$ $mod$ $\ringB$. To handle this problem, $\party_0$ and $\party_1$ add $\ringB$ to their shares of $y$ based on their shares $c_0$ and $c_1$. As a result, we convert the values $y$ and $r$ to ring $\ringA$ and fix the value of $y$ if necessary. The final step to obtain $x_i$ for party $\party_i$ is simply subtract $r_i$ from $y_i$ where $i \in \{0,1\}$.

Next, we prove the security of our protocol. $\party_2$ involves this protocol in execution of $\mathcal{F}_{PC}$. We give the proof $\mathcal{F}_{PC}$ above. At the end of the execution of $\mathcal{F}_{PC}$, $\party_2$ learns $u^\prime$. However, $u^\prime = u \oplus (x>r)$ and $\party_2$ does not know $u$. Thus  $u^\prime$ is uniformly distributed and can be perfectly simulated with randomly generated values. $\party_i$ where $i \in \{0,1\}$ sees fresh shares of $\langle r\rangle_i^\ringB$, $\{\langle r[j]\rangle_{i}^{\ringC}\}_{j \in[\numbit]}$, $w_i^B$ and $u_i^B$. These values can be perfectly simulated with randomly generated values. 
\end{proof}


\begin{lemma}
\label{lemma:msb}
The protocol $\mathsf{MSB}$ in Algorithm \ref{alg:msb} in the main paper securely realizes the functionality $\mathcal{F}_{MSB}$ in $\mathcal{F}_{\mathsf{MOC}}$ hybrid model.  
\end{lemma}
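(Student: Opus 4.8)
The plan is to mirror the structure of the proof of Lemma \ref{lemma:mc}: first establish correctness of the protocol, then argue security in the $\mathcal{F}_{\mathsf{MOC}}$-hybrid model by exhibiting simulators for each corrupted party. Since the full Algorithm \ref{alg:msb} folds $\mathsf{MOC}$ and $\mathsf{PC}$ together for efficiency, I would carry out the analysis on the compact, logically equivalent version in Algorithm \ref{alg:compact_msb}, where $\mathsf{MOC}$ appears as a single oracle call; this is exactly the decomposition the hybrid model licenses.

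For correctness, first I would observe that $\langle d\rangle_i^{\ringB} = \langle x\rangle_i \bmod \ringB$ together with the correctness of $\mathsf{MOC}$ (Lemma \ref{lemma:mc}) yields, in $\mathbb{Z}_{\ringA}$, the integer $d = x \bmod \ringB$, i.e.\ the low $(\numbit-1)$ bits of $x$. Writing $m \in \{0,1\}$ for the most significant bit, this gives $z = x - d = m\ringB$ in $\mathbb{Z}_{\ringA}$. The crux is then the role of the common random bit $f$: since $\ringA = 2\ringB$, we have $-\ringB \equiv \ringB \pmod{\ringA}$, so the reconstructions $a[0] = f\ringB - z$ and $a[1] = (1-f)\ringB - z$ each collapse to $(f \oplus m)\ringB$ and $((1-f)\oplus m)\ringB$ respectively. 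Hence $\party_2$'s division by $\ringB$ returns the bits $f \oplus m$ and $\overline{f} \oplus m$, and after $\party_2$ reshares them the proxies' selection $\langle z\rangle_i = \langle a[f]\rangle_i$ yields $f\oplus f\oplus m = m$, which is the claimed output.

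For security I would build two simulators. For a corrupted $\party_2$: its entire view consists of the reconstructed pair $(a[0], a[1])$, which by the computation above equals $((f\oplus m)\ringB, (\overline{f}\oplus m)\ringB)$ — two complementary values in $\{0, \ringB\}$ whose ordering is determined by the bit $f \oplus m$. Because $f$ is a common random bit unknown to $\party_2$, the bit $f \oplus m$ is uniform and independent of $m$, so the simulator samples a random bit $\rho$, sets the reconstructed values to $\rho\ringB$ and $(1-\rho)\ringB$, and hands back the corresponding fresh shares; this is a perfect simulation that leaks nothing about $m$. For a corrupted $\party_i$: in the hybrid model $\mathcal{F}_{\mathsf{MOC}}$ returns a fresh random share $\langle d\rangle_i$, so every subsequent value the party sees — $\langle z\rangle_i$, the local shares $\langle a[\cdot]\rangle_i$, and the fresh shares returned by $\party_2$ — is either its own input, the common randomness $f$, or a uniformly random share, all of which the simulator reproduces by sampling uniformly.

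I expect the main obstacle to be the argument that $\party_2$'s view carries no information about $m$. This hinges entirely on the modular identity $-\ringB \equiv \ringB \pmod{\ringA}$ that turns the $f$-masked differences into clean complementary bits, and on verifying that the two reconstructed values are \emph{exactly} complementary — so their \emph{joint} distribution, not merely each marginal, is independent of $m$ given the uniform $f$. I would also state carefully that reducing to the compact version is sound: the extra randomizers $g$, $\pi$, $\{s_j\}$ and $\{u_j\}$ in the optimized Algorithm \ref{alg:msb} are precisely the masks of the inlined $\mathsf{PC}$ sub-protocol, so their security is subsumed by $\mathcal{F}_{\mathsf{PC}}$ and need not be re-proved here.
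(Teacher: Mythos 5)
Your proof is correct and takes essentially the same route as the paper's: both analyze the compact Algorithm \ref{alg:compact_msb} in the $\mathcal{F}_{\mathsf{MOC}}$-hybrid model, establish $z = x - d = m\ringB$ with the common bit $f$ governing the selection, and argue security from the fact that the proxies only ever see fresh shares. If anything, your handling of a corrupted $\party_2$ --- showing the reconstructed pair $\bigl((f\oplus m)\ringB,\ (\overline{f}\oplus m)\ringB\bigr)$ is perfectly simulatable by a random ordering of $(0,\ringB)$ because $f$ is unknown to the helper --- is spelled out more carefully than in the paper, whose security argument for $\mathsf{MSB}$ only discusses the proxies' views.
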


\begin{proof}
First, we prove the correctness of our protocol. Assume that we have $\numbit$-bit number $u$. $v = u - (u \mod 2^{\numbit-1})$ is either $0$ or $2^{\numbit-1}$. In our protocol, $\langle z\rangle_i$ is the output of $\party_i$ where $i \in \{0,1\}$. We have to prove that $\mathsf{Reconstruct(\langle z\rangle_i)}$ is equal to the most significant bit (MSB) of $x$. $\party_i$ where $i \in \{0,1\}$ computes $d_i^\ringB=x_i \mod \ringB$ which is a share of $d$ over $\ringB$. $\party_i$ computes $d_i$ which is a share of $d$ over $\ringA$ by invoking $\mathsf{MOC}$. Note that $z= x-\mathsf{Reconstruct(\langle d\rangle_i)}$ and all bits of $z$ are $0$ except the MSB of $z$, which is equal to the MSB of $x$. Now, we have to map $z$ to $1$ if it is equal to $\ringB$ or $0$ if it is equal to $0$. $\party_0$ sends the $z_0$ and $z_0+\ringB$ in random order to $\party_2$ and $\party_1$ sends the $z_1$ to $\party_2$. $\party_2$ reconstructs two different values, divides these values by $\ringB$, creates two additive shares of them, and sends these shares to $\party_0$ and $\party_1$. Since $\party_0$ and $\party_1$ know the order of the real MSB value, they correctly select the shares of its mapped value.

Second, we prove the security of our protocol. $\party_i$ where $i \in \{0,1\}$ sees $\langle d\rangle_i$, which is a fresh share of $d$, and $\langle a[0]\rangle_i$ and $\langle a[1]\rangle_i$, one of which is a fresh share of the MSB of $x$ and the other is a fresh share of the complement of the MSB of $x$. Thus the view of $\party_i$ can be perfectly simulated with randomly generated values.
\end{proof}


\begin{lemma}
\label{lemma:cmp}
The protocol $\mathsf{CMP}$ in the main paper securely realizes the functionality $\mathcal{F}_{CMP}$ in $\mathcal{F}_{\mathsf{MSB}}$ hybrid model.  
\end{lemma}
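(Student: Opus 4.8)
The plan is to mirror the two-part structure of the preceding lemmas: first argue correctness, then argue security by reduction to the ideal functionality $\mathcal{F}_{\mathsf{MSB}}$. Since $\mathsf{CMP}$ consists of nothing more than one local subtraction followed by a single $\mathsf{MSB}$ call, both parts should reduce to the sign-bit semantics of the number format together with a composition argument.

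For correctness, I would start from the observation that each $\party_i$ forms $\langle x\rangle_i - \langle y\rangle_i$, which is a valid additive sharing of $x - y$ over $\mathbb{Z}_\ringA$, and then feeds this sharing to $\mathcal{F}_{\mathsf{MSB}}$. The key number-format fact (established in the Number Format section) is that the most significant bit of a representable value is its sign bit, equal to $0$ exactly when the value is nonnegative. Hence the MSB of $x - y$ is $0 \iff x - y \geq 0 \iff x \geq y$, so the shared bit returned by $\mathcal{F}_{\mathsf{MSB}}$ is precisely the output prescribed by $\mathcal{F}_{\mathsf{CMP}}$, namely $0$ when $x \geq y$ and $1$ otherwise. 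I would note the one standing assumption needed here: $x$, $y$, and their difference must all lie in the representable signed range so that $x - y$ does not wrap and corrupt the sign bit; this is the usual fixed-point range condition and should be stated explicitly.

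For security, I would work in the $\mathcal{F}_{\mathsf{MSB}}$-hybrid model, where the call to $\mathsf{MSB}$ is replaced by a single query to the ideal functionality. A corrupted proxy $\party_i$ sees only its input shares $\langle x\rangle_i, \langle y\rangle_i$, the locally derived difference share (which carries no new information, being a deterministic function of values it already holds), and the fresh output share $\langle z\rangle_i$ returned by $\mathcal{F}_{\mathsf{MSB}}$. Since that output share is a fresh additive share of a single bit, it is uniformly distributed and independent of the remaining share, so the simulator can produce it with a random value, exactly as in the proofs of Lemmas~\ref{lemma:mc} and~\ref{lemma:msb}. The helper $\party_2$ has no view at all outside the ideal call, since $\mathsf{CMP}$ itself sends it nothing. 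Security of the full protocol then follows by the standard hybrid-composition argument, substituting the real $\mathsf{MSB}$ protocol (secure by Lemma~\ref{lemma:msb}) for $\mathcal{F}_{\mathsf{MSB}}$.

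I do not expect a serious obstacle here: because $\mathsf{CMP}$ reduces to one local subtraction followed by a single $\mathcal{F}_{\mathsf{MSB}}$ call, both correctness and security are essentially immediate consequences of the sign-bit semantics of the number format and of composition. The only point demanding care is the range condition in the correctness argument — making precise when $x - y$ stays within the signed representable interval so that the sign bit faithfully encodes the comparison — which I would fold into the statement's implicit assumption on inputs.
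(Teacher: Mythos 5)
Your proposal is correct and follows essentially the same two-part structure as the paper's own proof: correctness via the sign-bit semantics of $x - y$ under the fixed-point number format, and security by reduction to the already-proven security of $\mathsf{MSB}$, noting that the subtraction is purely local. The only difference is that you are somewhat more explicit than the paper (which omits both the simulator details and the no-wrap range condition on $x-y$), but this is a refinement of the same argument rather than a different route.
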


\begin{proof}
First, we prove the correctness of our protocol. Assume that we have $x$ and $y$. We first compute $z = x - y$. If $z$ is negative, which corresponds to $1$ in the most significant bit of $z$, it means that $y > x$. In this case, $\mathsf{CMP}$ outputs $1$. If $z$ is non-negative, which corresponds to $0$ in the most significant bit of $z$, then it indicates that $x \geq y$. In this case, the output of $\mathsf{CMP}$ is $0$. Since the output of $\mathsf{CMP}$ exactly matches the output of $\mathsf{MSB}$ and we have already proved the correctness of $\mathsf{MSB}$, we can conclude that $\mathsf{CMP}$ works correctly.

Second, we prove the security of our protocol. Since $\langle z \rangle_i = \langle x \rangle_i - \langle y \rangle_i$ is computed locally by $\party_i$ for $i \in \{0,1\}$, it does not reveal any information about $x$ and $y$. Afterward, $\mathsf{MSB}$ is called on $\langle z \rangle_i$ to determine the most significant bit of $z$ in secret shared form. Considering that the security of $\mathsf{MSB}$ is proven, we can conclude that $\mathsf{CMP}$ compares two secret shared values without compromising their privacy.
\end{proof}

\begin{lemma}
\label{lemma:mux}
The protocol $\mathsf{MUX}$ in Algorithm \ref{alg:ss} in the main paper securely realizes the functionality $\mathcal{F}_{\mathsf{MUX}}$.  
\end{lemma}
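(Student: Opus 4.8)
The plan is to follow the same simulation-based paradigm used for the preceding building blocks: first establish the correctness of the reconstruction, and then exhibit, for each party that a semi-honest adversary might corrupt, a simulator whose output is perfectly indistinguishable from that party's real view. Unlike $\mathsf{MOC}$ or $\mathsf{MSB}$, the protocol $\mathsf{MUX}$ invokes no sub-functionality, so the claim can be argued directly rather than in a hybrid model.

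For correctness, I would denote by $p = M_2 M_5 + M_3 M_6$ the value that $\party_2$ computes and reshares as $\langle p\rangle_0 + \langle p\rangle_1$, and write $u = x - y$ with local shares $\langle u\rangle_i = \langle x\rangle_i - \langle y\rangle_i$. The final output is $\langle z\rangle_0 + \langle z\rangle_1 = (M_1 - \langle p\rangle_0) + (M_4 - \langle p\rangle_1) = M_1 + M_4 - p$. Expanding $p = (\langle b\rangle_0 + r_0)(\langle u\rangle_1 + r_1) + (\langle u\rangle_0 + r_3)(\langle b\rangle_1 + r_2)$ yields the two desired cross terms $\langle b\rangle_0\langle u\rangle_1 + \langle u\rangle_0\langle b\rangle_1$ together with six mask-dependent terms in $r_0,\dots,r_3$. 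The crux is then a bookkeeping check that the correction terms deliberately placed in $M_1$ (namely $r_1\langle b\rangle_0 + r_2\langle u\rangle_0 + r_2 r_3$) and in $M_4$ (namely $r_0\langle u\rangle_1 + r_0 r_1 + r_3\langle b\rangle_1$) match, one for one, those six mask terms and cancel them. What survives is $\langle x\rangle_0 + \langle x\rangle_1 - (\langle b\rangle_0 + \langle b\rangle_1)(\langle u\rangle_0 + \langle u\rangle_1) = x - b(x-y)$, which is exactly the output of $\mathcal{F}_{\mathsf{MUX}}$.

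For security I would construct three simulators. When $\party_2$ is corrupted, its entire view is $M_2 = \langle b\rangle_0 + r_0$, $M_3 = \langle u\rangle_0 + r_3$, $M_5 = \langle u\rangle_1 + r_1$ and $M_6 = \langle b\rangle_1 + r_2$; since $r_0, r_1, r_2, r_3$ are common random values held only by the proxies, the simulator outputs four independent uniform ring elements. When a proxy $\party_i$ is corrupted, its only incoming message is the share $\langle p\rangle_i$ from $\party_2$, which is one half of a fresh $2$-out-of-$2$ sharing and hence uniform and independent of the inputs, so the simulator samples a single uniform element; the rest of $\party_i$'s view is its own inputs and the common randomness, generated identically in both worlds.

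The step I expect to be the main obstacle is justifying perfect simulation for a corrupted $\party_2$, i.e.\ arguing that the joint distribution of $(M_2, M_3, M_5, M_6)$ is independent of $x$, $y$, and $b$. This amounts to verifying that the four masks $r_0, r_3, r_1, r_2$ are mutually independent and that each appears in exactly one of the four messages, so that the four masked values are jointly uniform regardless of the secret inputs---precisely the hiding guarantee of the randomized encoding of multiplication of \citet{applebaum2017garbled} that $\mathsf{MUX}$ instantiates. Once this independence is confirmed, perfect simulation of $\party_2$'s view follows, and the correctness calculation above certifies that the reshared output is consistent with the ideal functionality.
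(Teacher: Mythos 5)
Your proposal is correct and follows essentially the same route as the paper's proof: correctness via direct algebraic expansion showing the mask terms in $M_1$ and $M_4$ cancel the cross terms of $M_2M_5+M_3M_6$, leaving $x-b(x-y)$, and security by observing that $\party_2$'s view $(M_2,M_3,M_5,M_6)$ consists of values each masked by a distinct independent uniform random value (hence perfectly simulatable) while each proxy only receives a fresh share of the output. Your explicit note on joint uniformity of the four masked messages is slightly more careful than the paper's wording, but it is the same argument.
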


\begin{proof}
We first prove the correctness of our protocol. $\langle z\rangle_i$ is the output of $\party_i$ where $i \in \{0,1\}$. We need to prove that $\mathsf{Reconstruct(\langle z\rangle_i)} = (1-b)x+by$. 

\begin{equation}
\scriptsize
\begin{split}
\langle z\rangle_0 + \langle z\rangle_1 ={}& \langle x\rangle_0-\langle b\rangle_0(\langle x\rangle_0-\langle y\rangle_0)+r_1\langle b\rangle_0 + r_2(\langle x\rangle_0-\langle y\rangle_0) \\
& + r_2r_3 + \langle x\rangle_1 - \langle b \rangle_1(\langle x\rangle_1-\langle y\rangle_1) + r_0(\langle x\rangle_1-\langle y\rangle_1) \\
& + r_0r_1 + r_3\langle b\rangle_1-\langle b\rangle_0\langle x\rangle_1+\langle b\rangle_0\langle y\rangle_1 - \langle b\rangle_0r_1 - r_0\langle x\rangle_1 \\
& + r_0\langle y\rangle_1-r_0r_1 - \langle x\rangle_0\langle b\rangle_1-\langle x\rangle_0r_2+\langle y\rangle_0\langle b\rangle_1 \\ 
& + \langle y\rangle_0r_2 - r_3\langle b\rangle_1-r_3r_2\\
={}& (1-\langle b\rangle_0-\langle b\rangle_1)(\langle x\rangle_0+\langle x\rangle_1) + (\langle b\rangle_0+\langle b\rangle_1)(\langle y\rangle_0+\langle y\rangle_1)\\
={}& (1-b)x+by
\end{split}
\label{eq:ss}
\end{equation}

Next we prove the security of our protocol. $\party_2$ gets $M_2,M_3,M_5$ and $M_6$. All these values are uniformly random values because they are generated using uniformly random values $r_0,r_1,r_2,r_4$. $\party_2$ computes $M_2M_5+M_3M_6$. The computed value is still uniformly random because it contains uniformly random values $r_0,r_1,r_2,r_4$. As a result, any value learned by $\party_2$ is perfectly simulated. For each $i \in \{0,1\}$, $\party_i$ learns a fresh share of the output. Thus $\party_i$ cannot associate the share of the output with the shares of the inputs and any value learned by $\party_i$ is perfectly simulatable.
\end{proof}



\begin{lemma}
\label{lem:exp}
The protocol $\mathsf{EXP}$ securely computes the exponential of a publicly known base raised to the power of a secret shared value.
\end{lemma}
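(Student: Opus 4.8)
The proof will mirror the two-part structure of the preceding lemmas: I would first establish correctness, then argue security by sequential composition in the hybrid model where $\mathsf{MSB}$, $\mathsf{MUX}$ and $\mathsf{MUL}$ are treated as ideal functionalities. For correctness, the plan is to show that $\mathsf{Reconstruct}(\langle z\rangle_0, \langle z\rangle_1) = b^{x}$, where $x$ is the reconstructed power interpreted in the fixed-point format with $\dec$ fractional bits. I would start from the square-and-multiply identity $b^{a} = \prod_{i=0}^{\numbit-1} b^{\langle a\rangle_i \cdot 2^{i}}$ and verify that the locally computed per-bit factors $cP[\numbit-j]=b^{2^{j-\dec}}$ and $cN[\numbit-j]=1/b^{2^{j-\dec}}$ encode, respectively, the positive and negative contributions of bit $j$ under the fixed-point convention: for integer-part bits ($j\geq\dec$) this is ordinary repeated squaring, whereas for fractional-part bits ($j<\dec$) the exponent $2^{j-\dec}<1$ amounts to repeated square roots of the base, so the factor tends to $1$ deeper into the fraction.

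I would then track the two multiplexer stages. The first $\mathsf{MUX}$, driven by the secret sign bit $\langle s\rangle_i$ from the first $\mathsf{MSB}$, selects $(x, cP)$ for a non-negative power and $(|x|, cN)$ for a negative one, so that the reciprocal base correctly reproduces $b^{-|x|}$ in the two's-complement case. After the left-shift loop exposes each bit of $\hat{x}$ as a most significant bit, the batched $\mathsf{MSB}$ yields the secret bit vector $\langle M\rangle_i$, and the second $\mathsf{MUX}$ replaces the contribution of every unset bit by the neutral factor $1$ (via $cZERO$). The binary-tree $\mathsf{MUL}$ phase then multiplies all selected factors, giving $\prod_{j} b^{\langle\hat{x}\rangle_{j}\cdot 2^{j-\dec}}=b^{x}$.

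For security, the argument is compositional. Apart from the three secure sub-protocols, $\mathsf{EXP}$ performs only local operations: the negation $\langle|x|\rangle_i = 0-\langle x\rangle_i$, the factor precomputation in the first loop (which uses only the public base $b$ and the party index $i$, hence no secret-dependent communication), and the left-shifts in the second loop. By Lemmas \ref{lemma:msb} and \ref{lemma:mux} and the security of $\mathsf{MUL}$ adapted from SecureML, each invocation leaks nothing beyond fresh shares of its output. A simulator for $\party_i$ therefore runs the sub-protocol simulators in sequence, feeding each the fresh output shares of the previous call; since the local steps are non-interactive and every value $\party_i$ observes is a fresh share, the whole view is perfectly simulatable by uniformly random values.

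The main obstacle will be the correctness argument, in particular confirming that the fractional-bit factors (the square-root interpretation for $j<\dec$) together with the reciprocal-base contributions $cN$ for negative powers reconstruct exactly $b^{x}$ and not merely an approximation, and that the two successive $\mathsf{MUX}$ selections compose in the intended order. The security half is essentially routine once the sub-protocols are invoked as ideal functionalities.
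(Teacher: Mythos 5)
Your proposal is correct and follows essentially the same route as the paper: correctness via the fixed-point extension of square-and-multiply (public per-bit factors $cP$/$cN$, sign-driven first $\mathsf{MUX}$, shift-and-$\mathsf{MSB}$ bit extraction, second $\mathsf{MUX}$ against the neutral factor $1$, then a $\mathsf{MUL}$ tree), and security by composing the already-proven sub-protocols so that every value a proxy sees is a fresh share. The only point the paper makes that you omit is the explicit (and essentially trivial) case of a corrupted helper party, whose role in $\mathsf{EXP}$ is confined to supplying multiplication triples and computing on masked data inside the sub-protocols, so its view is likewise simulatable.
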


\begin{proof}
We begin the proof by showing the correctness of the method. Let $x$ be the power whose representation in our number format is $\langle x \rangle$ and $b$ be the publicly known base. $\party_0$ or $\party_1$ computes $C_p = \{\ldots, b^{8}, b^{4}, b^{2}, b, b^{1/2}, b^{1/4}, b^{1/8}, \ldots\}$ and $C_n = \{\ldots, b^{-8}, b^{-4}, b^{-2}, b^{-1}, b^{-1/2}, b^{-1/4}, b^{-1/8}, \ldots\}$ and the other generates a corresponding set of $0$s for $C_p$ and $C_n$. These values in $C_p$ and $C_n$ correspond to $b^{2^i}$ and $b^{-1 \cdot 2^i}$, respectively, for $i \in \{(\numbit - \dec), \ldots, 2, 1, 0, -1, -2, \ldots, -\dec\}$ assuming that only the corresponding bit value of the power $x$ is $1$. They choose one of these sets based on the sign of $x$ and let $C$ be the selected set. Afterwards, they must choose between $c_j \in C$ and $1$ depending on $\langle x \rangle_j$ where $j \in \{0,1,\ldots,\numbit\}$. For this selection, they use the most significant bit operation on all cases where each bit of $x$ is at the most significant bit position. This is done by shifting the shares of $x$ to the left. Once they have the correct set of contributions, they basically multiply all of those contributions to obtain the result of the exponential. This proves the correctness of $\mathsf{ EXP }$.


\textbf{Corruption of a proxy:} At the beginning, since the adversary corrupting a proxy knows only one share of the power $x$, that is either $x_0$ or $x_1$, it cannot infer any information about the other share. The first step of the exponential is to compute the possible contribution of every bit of positive and negative power. This is publicly known. The following step is to select between these contribution depending on the result of $\mathsf{MSB}(x)$ by using $\mathsf{MUX}$. Since both $\mathsf{MSB}$ and $\mathsf{MUX}$ are secure, the adversary can neither infer anything about $x_{1-j}$ nor relate the share of the result it obtains to $x$ in general. In the next step, they obtain each bit of $x$ in secret shared form by using $\mathsf{MSB}$ and bit shifting on the shares of $x$. Considering the proven security of $\mathsf{MSB}$ and the shifting being simply local multiplication of each share by $2$, there is no information that the adversary could obtain. Afterwards, the proxies select the correct contributions by employing $\mathsf{MUX}$. Since $\mathsf{MUX}$ gives the fresh share of what is selected, the adversary cannot associate the inputs to the output. The last step is to multiply these selected contributions via $\mathsf{MUL}$, which is also proven to be secure. Therefore, we can conclude that $\mathsf{EXP}$ is secure against a semi-honest adversary corrupting a proxy.


\textbf{Corruption of the helper:} Since the task of the helper party in the computation of the exponential of a secret shared power is either to provide multiplication triples or to perform the required computation on the masked data, there is nothing that the adversary corrupting the helper party could learn about $x$. Therefore, it is fair to state that $\mathsf{EXP}$ is secure against a semi-honest adversary corrupting the helper.
\end{proof}


\begin{lemma}
\label{lem:invsqrt}
The protocol $\mathsf{INVSQRT}$ privately computes the inverse square root of a secret shared Gram matrix with a high probability.
\end{lemma}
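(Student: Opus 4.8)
The plan is to establish the two claims of the lemma in turn: correctness, i.e.\ that the reconstruction of the output shares equals $\G^{-1/2}$, and privacy against a semi-honest corruption of any single party, paying attention to why the latter holds only with high probability rather than perfectly.

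For correctness I would propagate the masks through the protocol. Let $\G = \evc \evl \evc^T$ be the (secret) eigendecomposition, with $\evc$ orthogonal and $\evl$ diagonal. Since $\M$ is orthogonal, the reconstructed masked matrix is $\G' = \M(\alp \G + \s I)\M^T = (\M \evc)(\alp \evl + \s I)(\M \evc)^T$, so $localEIG$ returns eigenvectors $\evc' = \M \evc$ and eigenvalues $\evl' = \alp \evl + \s$. I would then show every mask cancels exactly: with $\evl'' = \evl' \odot \Delta + \alpha$ and $U = \s \Delta + \alpha$, party $\party_0$ obtains $\evl''' = (\evl'' - U)/\alp = \evl \odot \Delta$, and the $\mathsf{MUL}$ of $(\evl''')^{-1/2}$ with $\Delta^{1/2}$ divides out $\Delta$ and yields shares of $\evl^{-1/2}$. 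In parallel, $\langle \evc \rangle_i = \M^T \langle \evc' \rangle_i$ reconstructs to $\M^T (\M \evc) = \evc$. The final two $\mathsf{MUL}$ calls then produce shares whose reconstruction is $\evc \cdot diag(\evl^{-1/2}) \cdot \evc^T = \G^{-1/2}$, completing the correctness argument.

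For privacy I would treat the three corruptions separately. A corrupted $\party_1$ sees only a fresh additive share $\langle \evc' \rangle_1$ and the freshly sampled, secret-independent masks $\Delta, \alpha$, all simulatable by random values. A corrupted helper $\party_2$ sees $\G'$, whose decomposition exposes the orthogonally-masked eigenvectors $\M \evc$ and the affinely-masked eigenvalues $\alp \evl + \s$; I would argue that a random orthogonal $\M$ renders $\M \evc$ (nearly) independent of $\evc$ and that random $\alp, \s$ hide $\evl$ up to an affine ambiguity. A corrupted $\party_0$ additionally holds $\evl''$ and $U$, from which it can form $\evl \odot \Delta$, a multiplicative masking of the eigenvalues by the unknown random $\Delta$.

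The main obstacle, and the source of the ``high probability'' qualifier, is that these simulations cannot be made perfect. To keep the fixed-point masked values from overflowing $\mathbb{Z}_{\ringA}$ and to preserve their correspondence to the reals, the masks $\M, \alp, \s, \Delta, \alpha$ must be drawn from problem-specific bounded ranges instead of uniformly over the ring. Observing the masked quantities therefore lets a corrupted party exclude a thin sliver of the secret domain, shrinking the feasible range of the entries of $\G$, $\evc$ and $\evl$ slightly. The crux is thus quantitative: I would bound this leakage by showing that when the mask ranges are large enough relative to the magnitudes of the secret values, the adversary's posterior over the secrets agrees with the prior except on a set of negligible measure, so each party's view is statistically close to a simulated one, giving privacy with high probability. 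I would close by noting that in the RKN application the inverse square roots can instead be outsourced precomputed, letting the masks be fully random and recovering perfect privacy.
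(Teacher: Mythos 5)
Your proposal is correct and follows essentially the same route as the paper: the identical mask-propagation argument for correctness ($\evc' = \M\evc$, $\evl' = \alp\evl + \s$, cancellation via $U$ and $\Delta^{1/2}$), the same per-party case analysis for privacy, and the same identification of the bounded mask ranges as the reason privacy is only ``with high probability.'' The only difference is that you propose to make the leakage bound quantitative (statistical closeness of the adversary's posterior to the prior), whereas the paper settles for the qualitative argument that recovering the secrets from the range-restricted maskings is infeasible outside negligible edge cases; your stronger formulation would be a genuine improvement if carried out, but it is not needed to match the paper's claim.
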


\begin{proof}

We first demonstrate the correctness of $\mathsf{INVSQRT}$. Once the Gram matrix $\G$ whose eigenvalues and eigenvectors are $\evl$ and $\evc$, respectively, is masked by $\G^{'}= \M (\alp \G_i + \s I) \M^T$, we conduct eigenvalue decomposition on $\G^{'}$ resulting in the masked eigenvalues and eigenvectors of $\G$, which are $\evl^{'} = \alp \evl + \s$ and $\evc^{'} = \M \evc$, respectively. The recovery of $\evc$ can be achieved by $\evc = \M^T \evc^{'}$ since $M^T = M^{-1}$ due to the orthogonality. Since the masked eigenvalues $\evl^{'}$ are, once again, masked by computing $\evl^{''} = \Delta \evl^{'} + \alpha$, we have to compute the partial unmasker $U = \s \Delta + \alpha$ by using $\Delta$, $\alpha$ and $\s$, and subtract it from $\evl^{''}$ and divide the result by $\alp$ to obtain $\evl^{'''} = \Delta \evl$. Then, the multiplication of $(\evl^{'''})^{-1/2}$ and $\Delta^{1/2}$ gives us the reciprocal of the square root of the original eigenvalues, i.e. $\evl^{-1/2}$. Finally, we construct the inverse square root of $\G$ by computing $\evc \cdot diag(\evl^{-1/2}) \cdot \evc^{T}$. This concludes the correctness of $\mathsf{INVSQRT}$.


\textbf{Corruption of a proxy:} In case of a corruption of $\party_1$ by a semi-honest adversary, there is no gain for the adversary, since it only receives the share of the masked eigenvectors and the masks utilized by the helper party to mask the eigenvalues. At the end, $\party_1$ gets the share of the reciprocal of the square root of the eigenvalues as a secret shared result of $\mathsf{MUL}$. Therefore, $\mathsf{INVSQRT}$ is secure against an adversary corrupting $\party_1$. On the other hand, if an adversary corrupts $\party_0$, then it can reduce the possible eigenvalues of the Gram matrix into a specific set considering that $\M$, $\alp$, $\s$, $\Delta$ and $\alpha$ are from a specific range. Even in the worst case that the adversary deduces all the eigenvalues, which is extremely unlikely and negligible, it cannot reconstruct and learn the Gram matrix since $\party_0$ receives only the share of the eigenvectors, which is indistinguishable from a random matrix for $\party_0$. Thus, $\mathsf{INVSQRT}$ is also secure against an adversary corrupting $\party_0$.

\textbf{Corruption of the helper:} Since the values in $\M$, $\alp$ and $\s$ are from a specific range, the helper party can utilize the linear systems $\evl^{'} = \alp \evl + \s$ and $\evc' = M \evc$ to narrow down the possible values of $\evc$ and $\evl$, and $\G$ implicitly. Unless a masked value in the linear systems is at one of its edge cases, it is not possible for the adversary to infer the actual value. It is plausible to assume that not every value is at the edge case in these linear systems. Furthermore, considering that obtaining the eigenvalues of $\G$ has a relatively high chance, which is still extremely low, compared to obtaining the eigenvectors of $\G$, the adversary cannot form a linear system based on the inferred eigenvalues resulting in a unique solution for the eigenvectors, eventually a Gram matrix. Considering all these infeasibilities of obtaining the eigenvalues and eigenvectors in addition to the computational infeasibility and lack of validation method of a brute-force approach, the adversary cannot obtain $\G$ by utilizing $\evl'$ and $\evc'$. Therefore, $\mathsf{INVSQRT}$ is secure against a semi-honest adversary corrupting the helper party with a high probability.
\end{proof}

\begin{lemma}
\label{lem:pprkn_wo_invsqrt}
In case of the utilization of the outsourced inverse square root of the Gram matrices of the anchor points, the protocol \app{} securely realizes the inference of a test sample via the pre-trained RKN model. If $\mathsf{INVSQRT}$ is utilized, then the protocol \app{} privately performs the inference of a test sample via the pre-trained RKN model with a high probability.
\end{lemma}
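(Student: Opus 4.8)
The plan is to reduce this statement to the security and correctness of the individual building blocks already established in Lemmas \ref{lemma:mc} through \ref{lem:invsqrt}, together with the security of the adapted primitives $\mathsf{MUL}$ and $\mathsf{DP}$, and then to invoke the standard sequential composition theorem for protocols in the honest-but-curious model. The key observation is that \app{} is nothing more than a fixed sequence of calls to these sub-protocols, each of which consumes fresh secret shares and emits fresh secret shares; so I would first prove correctness by tracing the plaintext RKN computation through the composition, and then argue security by stitching together the simulators guaranteed by the sub-protocol lemmas. The two cases in the statement differ only in whether $\mathsf{INVSQRT}$ is invoked inline or whether pre-computed inverse square roots enter as outsourced shares, and this is exactly what produces the \emph{securely} versus \emph{with high probability} distinction.

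For correctness I would verify that each RKN quantity is reproduced exactly on the shares. The similarity $K(x_t, z_j^i)$ of Equation \ref{eq:sim} is obtained by $\mathsf{DP}$ for the dot product, a local plaintext subtraction of $1$ and scaling by $\alp$, followed by $\mathsf{EXP}$ with public base $e$; by Lemma \ref{lem:exp} this yields fresh shares of the exact exponential, giving $b_j[t]$. The recursion of Equation \ref{eq:iter} is then realized by one $\mathsf{MUL}$ for the Hadamard product $c_{j-1}[t-1]\otimes b_j[t]$, a local scaling of $c_j[t-1]$ by the public scalar $\lambda$, and an $\mathsf{ADD}$, so the proxies hold shares of $c_j[t]$ for every $j\in\{1,\ldots,\kmer\}$ and $t\in\{1,\ldots,\seq\}$. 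Multiplying $c_{\kmer}[\seq]$ by the shared inverse square root Gram matrices via $\mathsf{MUL}$ and taking the final $\mathsf{DP}$ against the classifier weights $\clas$ reproduces the RKN prediction; since every block was proven correct, the composition computes the same value as plaintext RKN.

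For security I would construct a simulator for the view of a semi-honest party by concatenating the per-block simulators. After outsourcing, each proxy holds only one additive share of the test sample and of every model parameter, which is information-theoretically uniform and hence trivially simulatable; the helper $\party_2$ receives no input shares at all. During inference, every message a corrupted party sees lies inside some sub-protocol invocation, and every intermediate value it stores is a fresh output share of a secure block. Because each sub-protocol is individually simulatable and re-randomizes its output, no party can correlate an output share with the inputs across block boundaries, so the global view is simulatable by composition. In the outsourced setting no $\mathsf{INVSQRT}$ call occurs, so the inverse square root matrices are indistinguishable from any other outsourced parameter and the argument yields unconditional semi-honest security, establishing the first claim.

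The main obstacle, and the source of the high-probability qualifier in the second claim, is the composition step at the single block whose security is itself only probabilistic, namely $\mathsf{INVSQRT}$. By Lemma \ref{lem:invsqrt} its masking scheme restricts $\M,\alp,\s,\Delta,\alpha$ to a problem-specific range to avoid overflow, so a corrupted $\party_0$ or the helper can narrow the possible eigenvalues and eigenvectors of $\G$ rather than being perfectly blinded. I would therefore carry the failure probability of $\mathsf{INVSQRT}$ through the composition: since it is the only block without perfect security and all other blocks compose with perfect security, the composed protocol inherits exactly the same high-probability guarantee when $\mathsf{INVSQRT}$ is used inline. The delicate point to argue carefully is that the partial information conceded inside $\mathsf{INVSQRT}$ does not compound with the perfectly simulatable remainder of \app{} to leak more than what Lemma \ref{lem:invsqrt} already allows, i.e. that the downstream $\mathsf{MUL}$ and $\mathsf{DP}$ calls, operating on fresh shares, contribute nothing to the adversary's knowledge beyond what the $\mathsf{INVSQRT}$ simulator already accounts for.
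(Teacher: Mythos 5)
Your proposal is correct and follows essentially the same route as the paper's own proof: a hybrid-model composition argument in which correctness and semi-honest security of \app{} are inherited from the already-proven building blocks, with the \emph{with high probability} qualifier arising solely from $\mathsf{INVSQRT}$ being the one block whose masking (range-restricted $\M,\alp,\s,\Delta,\alpha$) is not perfectly simulatable. Your version is in fact somewhat more explicit than the paper's — the paper's correctness argument leans on the experiments plus a one-line appeal to the hybrid model, and it handles the $\mathsf{INVSQRT}$ case by simply restating that lemma's leakage analysis rather than explicitly arguing, as you do, that the conceded partial information does not compound across the composition.
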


\begin{proof}
In addition to the experimental proof of the correctness of \app{}, based on the hybrid model, we can state that \app{} correctly performs the private inference since it utilizes the methods of \fw{} which are already shown to perform their corresponding tasks correctly.

\textbf{Corruption of a proxy by a semi-honest adversary:} In case of utilization of the outsourced inverse square root of the Gram matrices, the private inference of RKN via \app{} involves a series of $\mathsf{MUL}$, $\mathsf{ADD}$, $\mathsf{EXP}$, $\mathsf{MSB}$, $\mathsf{MUX}$ and $\mathsf{CMP}$ operations on secret shared data, and addition and multiplication between secret shared data and a public scalar. Since we proved or gave the references to the proof of the security of those methods, we can conclude that \app{} is secure via the hybrid model. Instead of using the outsourced inverse square root of the Gram matrices, if we employ $\mathsf{INVSQRT}$ to compute those matrices, the adversary corrupting $\party_0$ can narrow down the set of possible eigenvalues of the matrices to a relatively small set of values. Even in the worst case, the adversary cannot obtain the Gram matrix since it has only the share of the eigenvectors of the Gram matrix. For an adversary compromising $\party_1$, there is no information gain at all. Therefore, \app{} performs the fully private inference in case of the utilization of the outsourced inverse square root of the Gram matrices and realizes the private inference with a high probability when $\mathsf{INVSQRT}$ is used.

\textbf{Corruption of the helper party by a semi-honest adversary:} In case of utilization of the outsourced inverse square root of the Gram matrices, considering the same reason we stated in the scenario where the adversary corrupts a proxy, we can conclude that \app{} is secure via the hybrid model. Instead of using the outsourced inverse square root of the Gram matrices, if we employ $\mathsf{INVSQRT}$ to compute those matrices, the adversary corrupting the helper party can narrow down the set of possible values of the eigenvalues and the eigenvectors of the Gram matrix to a relatively small set of values. As we discuss in the security analysis of $\mathsf{INVSQRT}$, such a reduction on the set is not enough for the adversary to reconstruct the Gram matrix with a high probability. Therefore, \app{} is either fully secure or secure with a high probability depending on whether $\mathsf{INVSQRT}$ is used or not, respectively.
\end{proof}

\subsection{Discussion on Range of Random Values in $\mathsf{INVSQRT}$}
As we discussed in the main paper, $\mathsf{INVSQRT}$ requires the utilized random values to be from a specific range so that the masked values do not overflow and lose its connection to the real numbers. Let $\setR_{\alp}, \setR_{\s}, \setR_{\M}, \setR_{\Delta}$ and $\setR_{\alpha}$ be the set of values from which we can select $\alp, \s, \M_{ij}, \Delta$ and $\alpha$ in a certain setting, respectively. And, let $\setR_{\evl}$ be the set of eigenvalues of $\G$. Let us also assume that all these sets contain positive values. Then, we have the following constraints on these random values:
\begin{itemize}
    \item $\mathbf{\evl}$: Since the anchor points used in the private inference are $L_2$ normalized vectors, the dot product of two $L_2$ normalized vectors cannot exceed $1$. Furthermore, the diagonal of the Gram matrix of these anchor points are all $1$. Therefore, considering the fact that $\sum_{i}^{\numanc} \evl_i = Tr(\G)$, we can confidently have the following constraint:
    \begin{equation}
        max(\setR_\evl) \leq \numanc
        \label{eq:const_evl}
    \end{equation} 
    
    \item $\mathbf{\M}$: In the masking of $\G$, we compute $\G^{'}= \M (\alp \G_i + \s I) \M^T$. Considering the boundary case where every entry of $\G$ is $1$, we can rewrite the computation of $\G_{kl}$ as follows:
    \begin{equation*}
        \G^{'}_{kl} = \sum_{u}^{q} \sum_{v}^{q} \M_{kv} \big( \delta(u,v) (\alp + \s) + (1 - \delta(u,v) \alp \big) \M_{lu}
        \label{eq:masked_g_entry}
    \end{equation*}
    In this computation, we have $\numanc^2$-many addition operations each of which contains $2$ multiplication operations. Assume that $\M_{ij}$ is the largest entry of $\M$, that is $\M_{ij} = max(\setR_{\M})$. At some point, we have to compute $z = \M_{ij} (\alp + \s) \M_{ij}$. Due to the multiplication and the number format that we used, $z$ has to be smaller than $2^{\numbit - 2 \dec - 1}$. Since $\G^{'}_{kl} \leq 2^{\numbit - \dec - 1}$ not to lose the connection to the real numbers, in the worst case that every entry of $\M$ equals to $\M_{ij}$, we have the constraint $\numanc^2 (\M_{ij} (\alp + \s) \M_{ij}) < 2^{\numbit - \dec - 1}$. Considering all these constraints, we have the following limitation on the maximum entry of $\M$:
    \begin{equation}
        max(\setR_\M)^2 < \begin{cases}
                        \frac{2^{\numbit - 2 \dec - 1}}{\alp + \s}  \vspace{0.2cm} \\ 
                        \frac{2^{\numbit - \dec - 1}}{\numanc^2 (\alp + \s)}  \\
                    \end{cases}
        \label{eq:const_orth}
    \end{equation}
    The lowest of these two constraints determines the boundary of the values of $\M$'s entries.
    
    \item $\mathbf{\Delta}$: In $\mathsf{INVSQRT}$, $\party_2$ masks the masked eigenvalues by computing $\evl^{''} = \Delta \odot \evl^{'} + \alpha$ or $\evl^{''} = \Delta \odot (\alp \evl + \s) + \alpha$ if we write $\evl^{'}$ more explicitly. In the computation of $\evl^{''}$, the result of an elementwise multiplication cannot be larger than $2^{\numbit - 2 \dec - 1}$ and the overall result cannot exceed $2^{\numbit - \dec - 1}$. For the range of $\Delta_i$ where $\Delta_i$ is the maximum element of $\Delta$, we have the following piecewise limitation:
    \begin{equation}
        max(\setR_\Delta) < \begin{cases}
                        \frac{2^{\numbit - 2 \dec - 1}}{\alp \numanc + \s}  \vspace{0.2cm} \\ 
                        \frac{2^{\numbit - \dec - 1} - \alpha}{\alp \numanc + \s}  \\
                    \end{cases}
        \label{eq:const_delta}
    \end{equation}
    
    \item $\mathbf{\alp}$: For the upper bound on $\alp$, let us rewrite Equations \ref{eq:const_orth} and \ref{eq:const_delta}:
    \begin{equation}
        max(\setR_\alp) < \begin{cases}
                        \frac{2^{\numbit - 2 \dec - 1}}{\M_{ij}^2} - \s  \vspace{0.2cm}\\ 
                        \frac{2^{\numbit - \dec - 1}}{\numanc^2 \M_{ij}^2} - \s  \vspace{0.2cm} \\
                        \frac{2^{\numbit - 2 \dec - 1} - \s \Delta}{\Delta \numanc}  \vspace{0.2cm} \\ 
                        \frac{2^{\numbit - \dec - 1} - \alpha - \s \Delta}{\Delta \numanc}  \\
                    \end{cases}
        \label{eq:const_alp}
    \end{equation}

    \item $\mathbf{\s}$: In the initial masking of the eigenvalues of $\G$, we compute $\alp \G_i + \s I$ ignoring the orthogonal matrix multiplications. This leads to the masking $\evl^{'} = \alp \evl + \s$. In this masking, we have limited range for the values. Due to its nature of such a limitation, we have edge cases which one can uniquely map to corresponding $\alp$, $\evl$ and $\s$ values. In order to minimize this, $\s$ has to be large enough to cover the total gap between the largest and the third largest possible result of the multiplication between the possible values of $\alp$ and $\evl$. Note that the largest possible gap is $min(max(\setR_\alp), max(\setR_\evl))$ and it is mostly fair to assume that $max(\setR_\alp) > max(\setR_\evl)$. Then, we can approximate the lower bound of $min(\setR_\s)$ as follows:
    \begin{equation*}
        min(\setR_\s) > 2 \cdot max(\setR_\evl)
    \end{equation*}
    When we set $max(\setR_\evl)$ to $\numanc$ for the worst case scenario, we end up with the following constraint:
    \begin{equation}
        min(\setR_\s) > 2 \numanc
        \label{eq:const_min_s}
    \end{equation}
    With the assumption that $\setR_{\alp}$ and $\setR_{\evl}$ contains positive values, this lower bound on $\s$ reduces the number of cases that can be uniquely mapped to $min(max(\setR_\alp), max(\setR_\evl)) + 1$.
    For the upper bound on $\s$, let us rewrite the constraints given in Equation \ref{eq:const_alp}, we obtain the following upper bound on $\s$:
    \begin{equation}
        max(\setR_\s) < \begin{cases}
                        \frac{2^{\numbit - 2 \dec - 1}}{\M_{ij}^2} - \alp  \vspace{0.2cm}\\ 
                        \frac{2^{\numbit - \dec - 1}}{\numanc^2 \M_{ij}^2} - \alp  \vspace{0.2cm} \\
                        \frac{2^{\numbit - 2 \dec - 1}}{\Delta} - \alp \numanc  \vspace{0.2cm} \\ 
                        \frac{2^{\numbit - \dec - 1} - \alpha}{\Delta} - \alp \numanc  \\
                    \end{cases}
        \label{eq:const_max_s}
    \end{equation}
    
    \item $\mathbf{\alpha}$: Considering the masking that we described in the discussion of $\Delta$ and the argument on the choice of lower bound of $\s$, we have the following lower and upper bounds for $\alpha$:
    \begin{equation}
    \begin{split}
        min(\setR_\alpha) > 2 \cdot min\big(max(\setR_\Delta), & max(\setR_\alp) \cdot max(\setR_\evl) \\
        & + max(\setR_\s)\big)
        \label{eq:const_min_alpha}
    \end{split}
    \end{equation}
    \begin{equation}
        max(\setR_\alpha) < 2^{\numbit - \dec - 1} - 2^{\numbit - 2 \dec - 1}
        \label{eq:const_max_alpha}
    \end{equation}
    
\end{itemize}

In order to demonstrate the resulting ranges, let us image a scenario where we set the upper limits of the random values as $max(\setR_\evl) = 2^5$, $max(\setR_\alp = 2^5)$, $max(\setR_\s = 2^{10})$, $max(\setR_{\M} = 2^5)$, $max(\setR_\Delta = 2^{10})$ and $max(\setR_\alpha = 2^{30})$. Let us also set $\dec = 20$ and $\numbit = 64$. Considering the number format that we use, the size of these set are $max(\setR_\evl) = 2^{25}$, $max(\setR_\alp = 2^{25})$, $max(\setR_\s = 2^{30})$, $max(\setR_{\M} = 2^{25})$, $max(\setR_\Delta = 2^{30})$ and $max(\setR_\alpha = 2^{50})$. 

These ranges are smaller than the original range of the values in \fw{}. As we also stated in the main paper, $\mathsf{INVSQRT}$ does not provide the full randomness of MPC. It sacrifices the randomness to some extent for the sake of the computability of the inverse square root of a secret shared Gram matrix. $\mathsf{INVSQRT}$ is an attempt towards such a computation via MPC without sacrificing the randomness of it at all. For the time being, one can choose the option of utilizing outsource inverse square root of Gram matrices if the full randomness is required. In that case, the complete randomness of MPC is valid for every operation performed in the privacy preserving inference on RKNs.

\end{document}